\renewcommand{\algorithmicrequire}{\textbf{Input:}}
\renewcommand{\algorithmicensure}{\textbf{Output:}}
\theoremstyle{remark}
\newtheorem*{rem*}{\protect\remarkname}
\theoremstyle{plain}
\newtheorem{thm}{\protect\theoremname}
\theoremstyle{plain}
\theoremstyle{plain}
\newtheorem{lem}[thm]{\protect\lemmaname}
\theoremstyle{plain}
\theoremstyle{definition}
\theoremstyle{plain}
\newtheorem{prop}[thm]{\protect\propositionname}
\theoremstyle{definition}
\theoremstyle{definition}
\providecommand{\assumptionname}{Assumption}
\providecommand{\corollaryname}{Corollary}
\providecommand{\definitionname}{Definition}
\providecommand{\lemmaname}{Lemma}
\providecommand{\propositionname}{Proposition}
\providecommand{\remarkname}{Remark}
\providecommand{\theoremname}{Theorem}
\providecommand{\conditionname}{Condition}
\providecommand{\examplename}{Example}
\newcommand{\ci}[1]{\tiny{$\pm #1$}}
\newcommand{\removelatexerror}{\let\@latex@error\@gobble}
\begin{document}
\title{
Knockoffs-SPR: Clean Sample Selection in Learning with Noisy Labels}
\author{
Yikai~Wang,
Yanwei~Fu,
and Xinwei~Sun.
\IEEEcompsocitemizethanks{
\IEEEcompsocthanksitem Yikai Wang and Yanwei Fu contribute equally.
\IEEEcompsocthanksitem Xinwei Sun is the corresponding author.
\IEEEcompsocthanksitem Yikai Wang, Yanwei Fu and Xinwei Sun are with the School of Data Science and MOE Frontiers Center
for Brain Science, Fudan University, Shanghai 200437, China, and also
with Fudan ISTBI–ZJNU Algorithm Centre for Brain-inspired Intelligence,
Zhejiang Normal University, Jinhua, Zhejiang 321017, China. E-mail: \{yikaiwang19, yanweifu, sunxinwei\}@fudan.edu.cn
}
}

% The paper headers
\markboth{Journal of \LaTeX\ Class Files,~Vol.~14, No.~8, August~2015}%
{Shell \MakeLowercase{\textit{et al.}}: Bare Demo of IEEEtran.cls for Computer Society Journals}

\IEEEtitleabstractindextext{%
\begin{abstract}
A noisy training set usually leads to the degradation of the generalization and robustness of neural networks.
In this paper, we propose  a novel theoretically guaranteed clean sample selection framework for learning with noisy labels.
Specifically, we first present a Scalable Penalized Regression (\emph{SPR}) method, to model the linear relation between network features and one-hot labels. In SPR,  the clean data are identified by the zero mean-shift parameters solved in the regression model. We theoretically show that SPR can recover clean data under some conditions.
Under general scenarios, the conditions may be no longer satisfied; and some noisy data are falsely selected as clean data.
To solve this problem, 
we propose a data-adaptive method for Scalable Penalized Regression  with Knockoff filters  (\emph{Knockoffs-SPR}), which is provable to control the False-Selection-Rate (FSR) in the selected clean data. To improve the efficiency, we further present a split algorithm that divides the whole training set into small pieces that can be solved in parallel to make the framework scalable to large datasets. 
While Knockoffs-SPR can be regarded as a sample selection module for a standard supervised training pipeline, we further combine it with a semi-supervised algorithm to exploit the support of noisy data as unlabeled data.
Experimental results on several benchmark datasets and real-world noisy datasets show the effectiveness of our framework and validate the theoretical results of Knockoffs-SPR.
Our code and pre-trained models are available at \url{https://github.com/Yikai-Wang/Knockoffs-SPR}.
\end{abstract}
\begin{IEEEkeywords}
Learning with Noisy Labels, Knockoffs Method, Type-Two Error Control.
\end{IEEEkeywords}}
\maketitle
\IEEEdisplaynontitleabstractindextext
\IEEEpeerreviewmaketitle

\IEEEraisesectionheading{\section{Introduction}\label{sec:introduction}}
\begin{figure*}
\centering
\includegraphics[width=1\linewidth]{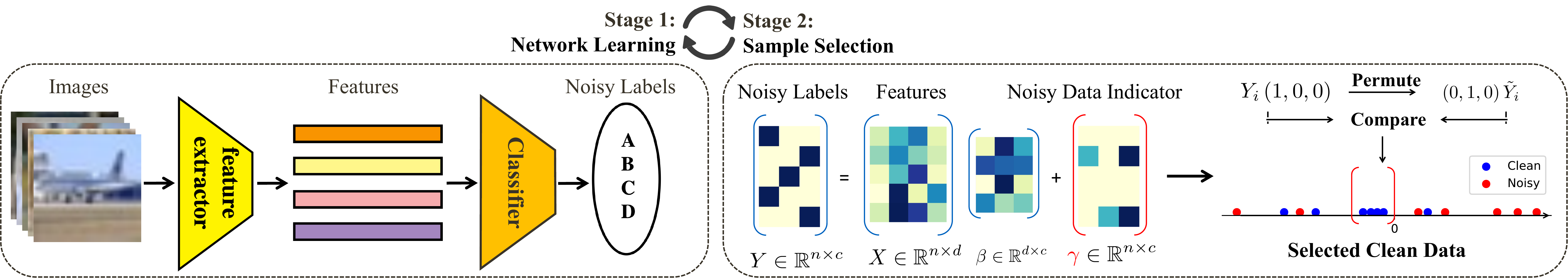}
\caption{Knockoffs-SPR runs a cycle between network learning and sample selection, where clean data are selected via the comparison of the mean-shift parameters between its original label and permuted label with a false-selection-rate control .
}
\label{fig:framework}
\end{figure*}
\IEEEPARstart{D}{eep} learning has achieved remarkable success on many  supervised learning tasks trained by millions of labeled training data. 
The performance of deep models heavily relies on the quality of label annotation since neural networks are susceptible to noisy labels and even can easily memorize randomly labeled annotations~\cite{zhang2017understanding}. Such noisy labels can lead to the degradation of the generalization and robustness of such models. Critically, 
it is expensive and difficult to obtain precise labels in many real-world scenarios,  thus exposing a realistic challenge for supervised deep models to learn with noisy data. 

There are many previous efforts in tackling this challenge by making the models robust to noisy data, such as modifying the network architectures ~\cite{xiao2015learning,goldberger2017training,chen2015webly,han2018masking} or loss functions~\cite{ghosh2017robust, zhang2018generalized, wang2019symmetric, lyu2020curriculum}.
This paper addresses the challenge by directly selecting clean samples. Inspired by the dynamic sample selection methods \cite{song2019selfie, lyu2020curriculum, han2018co, jiang2018mentornet,chen2019understanding,shen2019learning,yu2019does,nguyen2020self}, we construct 
a ``virtuous'' cycle between sample selection and network training:  
the selected clean samples can improve the network training; 
and on the other hand, the improved network  offers benefits in selecting clean data. 
As this cycle evolves, the performance can be improved. To well establish this, a key question remains: \emph{how to effectively differentiate  clean data from noisy ones?}

\textbf{Preliminary}. 
Typical principles in existing works \cite{song2019selfie, lyu2020curriculum, han2018co, jiang2018mentornet,chen2019understanding,shen2019learning,yu2019does,nguyen2020self} to differentiate clean data from noisy data
include large loss~\cite{han2018co}, inconsistent prediction~\cite{zhou2021robust}, and irregular feature representation~\cite{wu2020topological}. The former two principles identify irregular behaviors in the label space, while the last one analyzes the instance representations of the same class in the feature space.
In this paper, we propose unifying the label and feature space by making the linear relationship,
\begin{equation}
 \bm{y}_i=\bm{x}_i^{\top}\bm{\beta}+\bm{\varepsilon}, \label{eq:linear-equation}
\end{equation}
between feature-label pair ($\bm{x}_i\in\mathbb{R}^{p}$: feature vector; $\bm{y}_i\in\mathbb{R}^{c}$: one-hot label vector) of data $i$.
We also have the fixed (unknown) coefficient matrix
$\bm{\beta}\in\mathbb{R}^{p\times c}$, and random noise $\bm{\varepsilon}\in\mathbb{R}^{c}$. Essentially, the linear relationship here is an ideal approximation, as the networks are trained to minimize the divergence between a (soft-max) linear projection of the feature and a one-hot label vector. For a well-trained network, the output prediction of clean data is expected to be as similar to a one-hot vector as possible, while the entropy of the output of noisy data should be large.
Thus if the underlying linear relation is well-approximated without soft-max operation, the corresponding data is likely to be clean.
In contrast, the feature-label pair of noisy data may not be approximated well by the linear model. 

The simplest way to measure the fitting goodness of the linear model is to check the prediction error, or residual, $\bm{r}_i=\bm{y}_i-\bm{x}_i^{\top}\hat{\bm{\beta}}$, where $\hat{\bm{\beta}}$ is the estimate of $\bm{\beta}$.
The larger $\Vert\bm{r}_i\Vert$ indicates a larger error and thus more possibility for instance $i$ to be noisy data. Many methods have been proposed to test whether $\bm{r}_i$ is non-zero. Particularly, we highlight
the classical statistical leave-one-out approach~\cite{weisberg1985applied} that computes the studentized residual as,
\begin{equation}
\bm{t}_{i}=\frac{\bm{y}_{i}-\bm{x}_{i}^{\top} \hat{\bm{\beta}}_{-i}}{\hat{\sigma}_{-i}\left(1+\bm{x}_{i}^{\top}\left(\bm{X}_{-i}^{\top} \bm{X}_{-i}\right)^{-1} \bm{x}_{i}\right)^{1 / 2}},
\end{equation}
where $\hat{\sigma}$ is the scale estimate and the subscript $-i$ indicates estimates given the  $n-1$ data, leaving out the $i$-th data for testing.
Equivalently, the linear regression model can be re-formulated into explicitly representing the residual,
\begin{equation}
\bm{Y}=\bm{X}\bm{\beta}+\bm{\gamma}+\bm{\varepsilon},\quad\varepsilon_{i,j}\sim \mathcal{N}(0, \sigma^2),
\label{eq:lrip}
\end{equation}
by introducing a mean-shift parameter $\bm{\gamma}$ as in~\cite{she2011outlier} with the
feature $\bm{X}\in\mathbb{R}^{n\times p}$, and label $\bm{Y}\in\mathbb{R}^{n\times c}$ paired and stacked by rows. For
each row of $\bm{\gamma}\in\mathbb{R}^{n\times c}$, $\bm{\gamma}_i$ represents the predict residual of the $i$-th data.
This formulation has been widely studied in different research topics, including economics~\cite{neyman1948consistent, kiefer1956consistency, basu2011elimination, moreira2008maximum}, robust regression~\cite{she2011outlier,fan2018partial}, statistical ranking~\cite{fu2015robust}, face recognition~\cite{wright2009robust}, semi-supervised few-shot learning~\cite{wang2020instance,wang2021trust}, and Bayesian preference learning~\cite{simpson2020scalable}, to name a few.
This formulation is differently focused on the specific research tasks.
For example, for the robust regression problem~\cite{she2011outlier,fan2018partial}, the target is to get a robust estimate $\hat{\bm{\beta}}$ against the influence of $\bm{\gamma}$.
Here for sampling clean data from noisy labels, we are interested in recovering zero rows of $\bm{\gamma}$, as these elements correspond to clean data.

\textbf{SPR}~\cite{wang2022scalable}. 
To this end, from the statistical perspective, our conference report~\cite{wang2022scalable} starts from Eq.~\eqref{eq:lrip} to build up a sample selection framework, dubbed \emph{Scalable Penalized Regression} (SPR). With a sparse penalty $P(\gamma;\lambda)$ on $\bm{\gamma}$, the SPR obtains a regularization solution path of $\bm{\gamma}(\lambda)$ by evolving  $\lambda$  from $\infty$ to 0. Then it identifies those samples that are earlier (or at larger $\lambda$) selected to be non-zeros as noisy data and those later selected as clean data, with a manually specified ratio of selected data. Under the irrepresentable condition~\cite{wainwright2009sharp, zhao2006model}, the SPR enjoys model selection consistency in the sense that it can recover the set of noisy data. By feeding only clean data into next-round training, the trained network is less corrupted by the noisy data and hence performs well empirically. 

\textbf{Knockoffs-SPR}. However, the irrepresentable condition 
demands the prior of the ground-truth noisy set, which is not accessible in practice. 
Thus we cannot know whether SPR is theoretically guaranteed in practice, and when this condition fails, the trained network with SPR may be still corrupted by a large proportion of noisy data, leading to performance degradation as empirically verified in our experiments.
To amend this problem, we provide a data-adaptive sample selection algorithm, in order to well control the expected rate of noisy data in the selected data under the desired level $q$, \emph{e.g.}, $q = 0.05$. As the goal is to identify clean data for the next-round training, we term this rate as the \emph{False-Selection-Rate} (FSR). The FSR is the expected rate of the type-II error in sparse regression, as non-zero elements correspond to the noisy data. 
Our method to achieve the FSR control is inspired by the ideas of knockoffs in Statistics, 
which is a recently developed framework for variable selection \cite{barber2015controlling, dai2016knockoff, barber2019knockoff, cao2021control}.
The knockoffs aims at selecting non-null variables and controlling the False-Discovery-Rate (FDR), by taking as negative controls
knockoff features $\tilde{\bm{X}}$, which are constructed as a fake copy for the original features $\bm{X}$. As the FDR corresponds to the expectation of the type-I error rate in sparse regression, the vanilla knockoffs cannot be directly applied to sample selection, since FSR is the expected rate of the type-II error and there is no theoretical guarantee in knockoffs to control the type-II error. To achieve the FSR control, we propose Knockoffs-SPR, which turns to construct the knockoff labels $\tilde{\bm{Y}}$ via permutation for the original label $\bm{Y}$, and incorporates it into a data-partition strategy for FSR control.

Formally, we repurpose the knockoffs in Statistics in our SPR method; and propose a novel data-adaptive sample selection algorithm, dubbed Knockoffs-SPR. It extends SPR in controlling the ratio of noisy data among the selected clean data. With this property, Knockoffs-SPR ensures that the clean pattern is dominant in the data and hence leads to better network training. Specifically, we partition the whole noisy training set into two random subsets and apply the Knockoffs-SPR to two subsets separately. For each time, we use one subset to estimate $\bm{\beta}$ and the other to select the clean data by comparing between the solution paths of $\bm{\gamma}(\lambda)$ and $\tilde{\bm{\gamma}}(\lambda)$ that respectively obtained via regression on noisy labels and the permuted labels. With such a decoupled structure between $\bm{\beta}$ and $\bm{\gamma}$, we prove that the FSR can be controlled by any prescribed level. Compared with the original theory of SPR, our new theory enables us to effectively select clean data under general conditions. 

Together with network training, the whole framework is illustrated in Fig.~\ref{fig:framework} in which the sample selection and the network learning are well incorporated into each other.
Specifically, we run the network learning process and sample selection process iteratively and repeat this cycle until convergence. 
To incorporate Knockoffs-SPR into the end-to-end training pipeline of deep architecture, the simplest way is to directly solve Knockoffs-SPR for each training mini-batch or training epoch to select clean data. 
Solving Knockoffs-SPR for each mini-batch is efficient but suffers from the \textbf{identifiability} issue. 
The sample size in a mini-batch may be too small to distinguish clean patterns from noisy ones among all classes, especially for large datasets with small batch size.
Solving Knockoffs-SPR for the whole training set is powerful but suffers from the \textbf{complexity} issue, leading to an unacceptable computation cost. 
To resolve these two problems, we  strike a balance between complexity and identifiability by proposing a splitting strategy that divides the whole data into small pieces such that each piece is class-balanced with the proper sample size. 
The sample size of each piece is small enough to be solved efficiently and large enough to distinguish clean patterns from noisy ones.
Then Knockoffs-SPR runs on each piece in parallel, making it scalable to large datasets.

As the removed noisy data still contain useful information for network training, we adopt the semi-supervised training pipeline with CutMix~\cite{yun2019cutmix} where the noisy data are utilized as unlabeled data. 
We conduct extensive experiments to validate the effectiveness of our framework on several benchmark datasets and real-world noisy datasets. The results show the efficacy of our Knockoffs-SPR algorithm.

\textbf{Contributions}. 
Our contributions are as follows:
\begin{itemize}[leftmargin=*,itemsep=0pt,topsep=0pt,parsep=0pt]
\item \textbf{Ideologically}, we propose to control the False-Selection-Rate in selecting clean data, under general scenarios.
\item \textbf{Methodologically}, we propose Knockoffs-SPR, a data-adaptive method to control the FSR. 
\item \textbf{Theoretically}, we prove that the Knockoffs-SPR can control the FSR under any desired level. 
\item \textbf{Algorithmically}, we propose a splitting algorithm for better sample selection with balanced identifiability and complexity to scale up to large datasets.
\item \textbf{Experimentally}, we demonstrate the effectiveness and efficiency of our method on several benchmark datasets and real-world noisy datasets. 
\end{itemize}

\textbf{Extensions}.
Our conference version of this work, SPR, was published in~\cite{wang2022scalable}.
Compared with SPR~\cite{wang2022scalable}, we have the following extensions.
\begin{itemize}[leftmargin=*,itemsep=0pt,topsep=0pt,parsep=0pt]
\item We identify the limitation of the SPR and consider the FSR control in selecting clean data. 
\item We propose a new framework: Knockoffs-SPR which is effective in selecting clean data under general scenarios, theoretically and empirically. 
\item We apply our method on Clothing1M and achieve better results than compared baselines. 
\end{itemize}

\textbf{Logistics}.
The rest of this paper is organized as follows:
\begin{itemize}[leftmargin=*,itemsep=0pt,topsep=0pt,parsep=0pt]
\item In Section~\ref{sec:spr}, we introduce our SPR algorithm with its noisy set recovery theory.

\item In Section~\ref{sec:Knockoffs-SPR}, the Knockoffs-SPR algorithm is introduced with its FSR control theorem.

\item In Section~\ref{sec:learning}, several strategies are proposed to well incorporate the Knockoffs-SPR with the network training. 

\item In Section~\ref{sec:related}, connections are made between our proposed works and several previous works.

\item In Section~\ref{sec:experiments}, we conduct experiments on several synthetic and real-world noisy datasets, with further empirical analysis on Knockoffs-SPR.

\item Section~\ref{sec:conclusion} concludes this paper. 
\end{itemize}
\section{\label{sec:spr}Clean Sample Selection}
\subsection{Problem Setup}
We are given a dataset of image-label pairs $\{(\mathrm{img}_i,y_i)\}_{i=1}^n$, where the noisy label $y_i$ is corrupted from the ground-truth label $y_i^*$.
The ground-truth label $y_i^*$ and the corruption process are unknown.
Our target is to learn a model $f(\cdot)$ such that it can recognize the true class $y_i^*$ from the image $\mathrm{img}_i$, \emph{i.e.}, $f(\mathrm{img}_i)=y_i^*$, after training on the noisy label~$y_i$.

In this paper, we adopt deep neural networks as the recognition model and divide the $f(\cdot)$ into $\textrm{fc}(g(\cdot))$ where $g(\cdot)$ is the deep model for feature extraction and $\textrm{fc}(\cdot)$ is the final fully-connected layer for classification.
For each input image $\mathrm{img}_i$, the feature extractor $g(\cdot)$ is used to encode the feature $\bm{x}_i\coloneqq g(\mathrm{img}_i)$. Then the fully-connected layer is used to output the score vector $\hat{\bm{y}}_i=\textrm{fc}(\bm{x}_i)$ which indicates the chance it belongs to each class and the prediction is provided with $\hat{y}_i=\mathrm{argmax}(\hat{\bm{y}}_i)$.

As the training data contain many noisy labels, simply training from all the data leads to severe degradation of generalization and robustness.
Intuitively, if we could identify the clean labels from the noisy training set, and train the network with the clean data, we can reduce the influence of noisy labels and achieve better performance and robustness of the model.
To achieve this, we propose a sample selection algorithm to identify the clean data in the noisy training set with theoretical guarantees.

\textbf{Notation}. In this paper, we will use $a$ to represent scalar, $\bm{a}$ to represent a vector, $\bm{A}$ to represent a matrix, and $\mathcal{A}$ to represent a set.
We will annotate $a^*$ to denote the ground-truth value of $a$.
We use $\Vert\cdot\Vert_{\mathrm{F}}$ to denote the Frobenius norm.

\subsection{\label{subsec:spr}Clean Sample Selection via Penalized Regression}
Motivated by the leave-one-out approach for outlier detection, we introduce an explicit noisy data indicator $\bm{\gamma}_i$ for each data and assume a linear relation between extracted feature $\bm{x}_i$ and one-hot label $\bm{y}_i$ with noisy data indicator as,
\begin{equation}
\label{eq:model}
\bm{y}_{i}=\bm{x}_{i}^{\top}\bm{\beta}+\bm{\gamma}_i+\bm{\varepsilon}_i,   
\end{equation}
where $\bm{y}_i\in\mathbb{R}^{c}$ is one-hot vector for $c$-class task; and $\bm{x}_i\in\mathbb{R}^p, \bm{\beta}\in\mathbb{R}^{p\times c},\bm{\gamma}_i\in\mathbb{R}^{c},\bm{\varepsilon}_i\in\mathbb{R}^{c}$.
The noisy data indicator $\bm{\gamma}_i$ can be regarded as the correction of the linear prediction.
For clean data, $\bm{y}_i\sim\mathcal{N}(\bm{x}_{i}^{\top}\bm{\beta}^*, \sigma^2\bm{I}_c)$ with $\bm{\gamma}_i^*=0$, and for noisy data $\bm{y}_i^*=\bm{y}_i-\bm{\gamma}_i^*\sim\mathcal{N}(\bm{x}_{i}^{\top}\bm{\beta}^*, \sigma^2)$. We denote $\mathcal{C}:=\{i: \bm{\gamma}^*_i = 0\}$ as the ground-truth clean set.

To select clean data for training, 
we propose \emph{Scalable Penalized Regression} (SPR), designed as 
the following sparse learning paradigm,
\begin{equation}
\label{eq:original-problem}
\underset{\bm{\beta},\bm{\gamma}}{\mathrm{argmin}}  \frac{1}{2}\left\Vert \bm{Y}-\bm{X}\bm{\beta}-\bm{\gamma}\right\Vert _{\mathrm{F}}^{2}+P(\bm{\gamma};\lambda),
\end{equation}
where we have the matrix formulation  $\bm{X} \in \mathbb{R}^{n \times p}$, and $ \bm{Y} \in \mathbb{R}^{n \times c}$  of  $\{\bm{x}_i,\bm{y}_i\}_{i=1}^n$; and $P(\cdot;\lambda)$ is a row-wise sparse penalty with coefficient parameter $\lambda$. So we have $P(\bm{\gamma};\lambda)=\sum_{j=1}^n P(\bm{\gamma}_i;\lambda)$, \emph{e.g.}, group-lasso sparsity with $P(\bm{\gamma};\lambda)=\lambda \sum_i \Vert \bm{\gamma}_i \Vert_2$.

\begin{figure}
\centering
\includegraphics[width=0.7\linewidth]{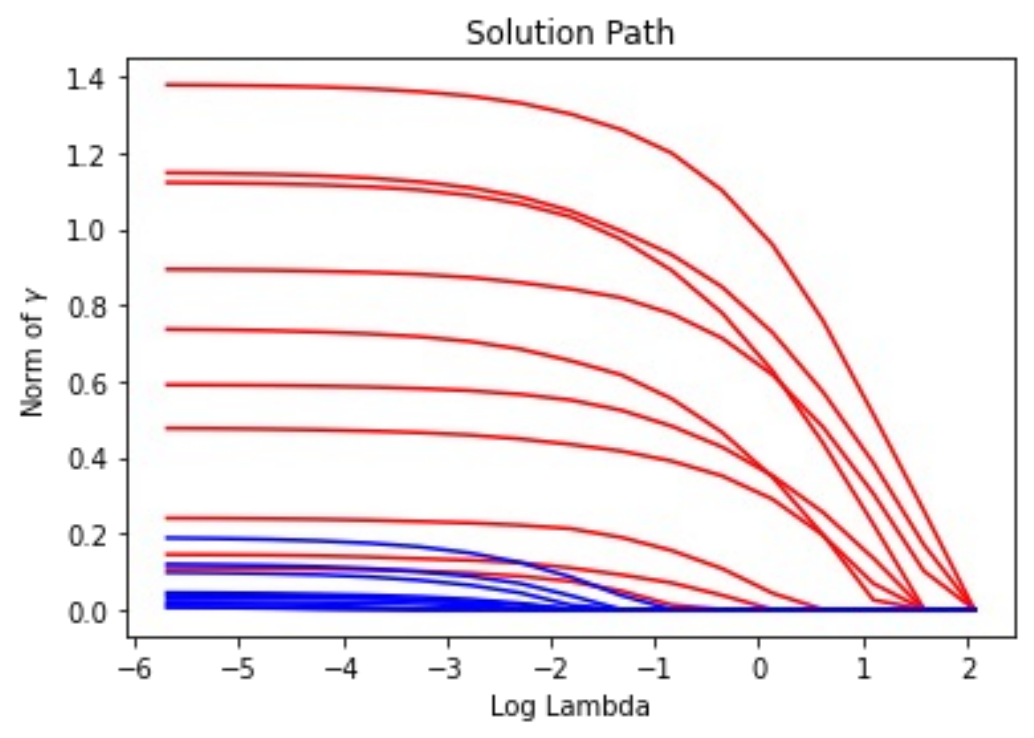}
\caption{Solution Path of SPR. Red lines indicate noisy data while blue lines indicate clean data.
As $\lambda$ decreases, the $\bm{\gamma}_i$ gradually solved with non-zero values.
}
\label{fig:spr}
\end{figure}

To estimate $\mathcal{C}$, we only need to solve $\bm{\gamma}$ with no need to estimate $\bm{\beta}$.
Thus to simplify the optimization, we substitute the Ordinary Least Squares (OLS) estimate for $\bm{\beta}$  with $\bm{\gamma}$ fixed into Eq.~\eqref{eq:original-problem}. 
To ensure that $\hat{\bm{\beta}}$ is identifiable, we apply PCA on $\bm{X}$ to make $p \ll n$ so that the $\bm{X}$ has full-column rank. 
Denote $\tilde{\bm{X}}=\bm{I}-\bm{X}\left(\bm{X}^{\top}\bm{X}\right)^{\dagger}\bm{X}^{\top},\tilde{\bm{Y}}=\tilde{\bm{X}}\bm{Y}$, the Eq.~\eqref{eq:original-problem} is transformed into
\begin{equation}
\underset{\bm{\gamma}}{\mathrm{argmin}}\frac{1}{2}\left\Vert \tilde{\bm{Y}}-\tilde{\bm{X}}\bm{\gamma}\right\Vert _{\mathrm{F}}^{2}+P(\bm{\gamma};\lambda), 
\label{eq:spr}
\end{equation}
which is a standard sparse linear regression for $\bm{\gamma}$. 
Note that in practice we can hardly choose a proper $\lambda$ that works well in all scenarios.
Furthermore, from the equivalence between the penalized regression problem and Huber's M-estimate, the solution of $\bm{\gamma}$ is returned with soft-thresholding.
Thus it is not worth  finding the precise solution of a single $\bm{\gamma}$.
Instead, we use a block-wise descent algorithm~\cite{simon2013blockwise} to solve $\bm{\gamma}$ with a list of $\lambda$s and generate the solution path.
As $\lambda$ changes from $\infty$ to $0$, the influence of sparse penalty decreases, and $\bm{\gamma}_i$ are gradually solved with non-zero values, in other words, selected by the model, as visualized in Fig.~\ref{fig:spr}.
Since earlier selected instance is more possible to be noisy, we rank all samples in the descendent order of their selecting time defined as:  
\begin{equation}
Z_{i}=\sup\left\{ \lambda:\bm{\gamma}_{i}\left(\lambda\right)\neq0\right\}.
\label{eq:select}
\end{equation}
A large $Z_i$ indicates earlier selected $\bm{\gamma}_i$. 
Then the top samples are identified as noisy data and the others are selected as clean data.
In the absence of knowledge about the clean ratio, we select 50\% of the data as clean data in practice.

\subsection{The Theory of Noisy Set Recovery in SPR}
The SPR enjoys theoretical guarantees that the noisy data set can be fully recovered with high probability, under the irrepresentable condition~\cite{zhao2006model}.
Formally, consider the vectorized version of Eq.~\eqref{eq:spr}:
\begin{equation}
\underset{\Vec{\bm{\gamma}}}{\mathrm{argmin}}\frac{1}{2}\left\Vert \Vec{\bm{y}}-\mathring{\bm{X}}\Vec{\bm{\gamma}}\right\Vert _{2}^{2}+\lambda\left\Vert \Vec{\bm{\gamma}}\right\Vert _{1}, 
\label{eq:vec-spr}
\end{equation}
where $\Vec{\bm{y}},\Vec{\bm{\gamma}}$ is vectorized from $\bm{Y},\bm{\gamma}$ in Eq.~\eqref{eq:spr}; $\mathring{\bm{X}}=I_{c}\otimes\tilde{\bm{X}}$ with $\otimes$ denoting the Kronecker product operator and $\bm{I}_c$ is the $c\times c$ identity matrix. 
Denote $\mathcal{S}:=\mathrm{supp}(\Vec{\bm{\gamma}}^*)$, which is the noisy set $\mathcal{C}^c$. 
We further denote $\mathring{\bm{X}}_{\mathcal{S}}$ (resp. $\mathring{\bm{X}}_{\mathcal{S}^c}$) as the column vectors of $\mathring{\bm{X}}$ whose indexes are in $\mathcal{S}$ (resp. $\mathcal{S}^c$) and $\mu_{\mathring{\bm{X}}}=\max_{i\in \mathcal{S}^c}\Vert\mathring{\bm{X}}\Vert_2^2$. 
Then we have
\begin{thm}[Noisy set recovery]
\label{thm:original}
Assume that:\newline 
\textbf{\emph{C1, Restricted eigenvalue:}} $\lambda_{\min}(\mathring{\bm{X}}_{\mathcal{S}}^{\top}\mathring{\bm{X}}_{\mathcal{S}})=C_{\min} >0$;\newline 
\textbf{\emph{C2, Irrepresentability:}} there exists a
$\eta \in (0,1]$, such that  $\Vert\mathring{\bm{X}}_{\mathcal{S}^c}^{\top}\mathring{\bm{X}}_{\mathcal{S}}(\mathring{\bm{X}}_{\mathcal{S}}^{\top}\mathring{\bm{X}}_{\mathcal{S}})^{-1}\Vert_\infty \leq 1-\eta$;\newline
\textbf{\emph{C3, Large error:}}
$\quad\Vec{\bm{\gamma}}^*_{\min}\coloneqq \min_{i\in \mathcal{S}}|\Vec{\bm{\gamma}}^*_{i}|>h(\lambda,\eta,\mathring{\bm{X}},\Vec{\bm{\gamma}}^*)$;\newline
where $\Vert\bm{A}\Vert_\infty\coloneqq \max_i \sum_j |A_{i,j}|$, and 
$h(\lambda,\eta,\mathring{\bm{X}},\Vec{\bm{\gamma}}^*)=\lambda\eta/\sqrt{C_{\min}\mu_{\mathring{\bm{X}}}}+\lambda \Vert(\mathring{\bm{X}}_{\mathcal{S}}^{\top}\mathring{\bm{X}}_{\mathcal{S}})^{-1}\mathrm{sign}(\Vec{\bm{\gamma}}_{\mathcal{S}}^*)\Vert_\infty$.\newline 
Let $\lambda\geq \frac{2\sigma \sqrt{\mu_{\mathring{\bm{X}}}}}{\eta}\sqrt{\log cn}$. 
Then with probability greater than $1-2(cn)^{-1}$, model Eq.~\eqref{eq:vec-spr} has a unique solution $\hat{\Vec{\bm{\gamma}}}$ such that: 1) If C1 and C2 hold, $\hat{\mathcal{C}^c}\subseteq \mathcal{C}^c$;2) If C1, C2 and C3 hold, $\hat{\mathcal{C}^c}= \mathcal{C}^c$.
\end{thm}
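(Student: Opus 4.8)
The plan is to adapt the primal--dual witness (PDW) construction of \cite{wainwright2009sharp} to the mean-shift design $\mathring{\bm{X}} = \bm{I}_c\otimes\tilde{\bm{X}}$. First I would rewrite Eq.~\eqref{eq:vec-spr} in the form $\Vec{\bm{y}} = \mathring{\bm{X}}\Vec{\bm{\gamma}}^* + \Vec{\bm{w}}$, where $\Vec{\bm{\gamma}}^*$ is supported on $S = C^c$ and the effective noise $\Vec{\bm{w}} = (\bm{I}_c\otimes\tilde{\bm{X}})\Vec{\bm{\varepsilon}}$ is the projection of the Gaussian error through the residual-maker $\tilde{\bm{X}}$; since $\tilde{\bm{X}}$ is an orthogonal projector, $\Vec{\bm{w}}$ is centered Gaussian with covariance $\sigma^2(\bm{I}_c\otimes\tilde{\bm{X}})\preceq\sigma^2\bm{I}$, which is all the tail bounds below will need. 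Writing the KKT system of the $\ell_1$ problem as $\mathring{\bm{X}}^\top(\mathring{\bm{X}}\hat{\Vec{\bm{\gamma}}}-\Vec{\bm{y}}) + \lambda\hat{\bm{z}} = 0$ with $\hat{\bm{z}}\in\partial\Vert\hat{\Vec{\bm{\gamma}}}\Vert_1$, the whole proof reduces to exhibiting a feasible pair $(\hat{\Vec{\bm{\gamma}}},\hat{\bm{z}})$ with the correct support.

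\textbf{Step 1 (oracle candidate and certificate lemma).} I would construct $\check{\Vec{\bm{\gamma}}}$ supported on $S$ by solving the KKT equations restricted to $S$ with the subgradient pinned to $\mathrm{sign}(\Vec{\bm{\gamma}}^*_S)$; invertibility of $\mathring{\bm{X}}_S^\top\mathring{\bm{X}}_S$ (C1) gives the closed form
\[
\check{\Vec{\bm{\gamma}}}_S - \Vec{\bm{\gamma}}^*_S = (\mathring{\bm{X}}_S^\top\mathring{\bm{X}}_S)^{-1}\big(\mathring{\bm{X}}_S^\top\Vec{\bm{w}} - \lambda\,\mathrm{sign}(\Vec{\bm{\gamma}}^*_S)\big),
\]
and, substituting into the $S^c$ block, the induced dual coordinate $\check{\bm{z}}_{S^c} = \mathring{\bm{X}}_{S^c}^\top\mathring{\bm{X}}_S(\mathring{\bm{X}}_S^\top\mathring{\bm{X}}_S)^{-1}\mathrm{sign}(\Vec{\bm{\gamma}}^*_S) + \lambda^{-1}\mathring{\bm{X}}_{S^c}^\top\bm{\Pi}_S^\perp\Vec{\bm{w}}$, where $\bm{\Pi}_S^\perp = \bm{I}-\mathring{\bm{X}}_S(\mathring{\bm{X}}_S^\top\mathring{\bm{X}}_S)^{-1}\mathring{\bm{X}}_S^\top$. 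The PDW lemma then states: if (a) $\Vert\check{\bm{z}}_{S^c}\Vert_\infty<1$ and (b) $\mathrm{sign}(\check{\Vec{\bm{\gamma}}}_S)=\mathrm{sign}(\Vec{\bm{\gamma}}^*_S)$, then $(\check{\Vec{\bm{\gamma}}},\check{\bm{z}})$ verifies all KKT conditions and, by C1, is the \emph{unique} minimizer. On this event $\hat{C^c}=\mathrm{supp}(\check{\Vec{\bm{\gamma}}})\subseteq S=C^c$ already from (a), and $\hat{C^c}=C^c$ once (b) holds too. This is exactly where the apparent circularity of condition C3 is avoided: we never presuppose sign recovery, we only verify it for the explicitly constructed candidate.

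\textbf{Step 2 (verifying the certificate with probability $\ge 1-2(cn)^{-1}$).} For (a), $\Vert\check{\bm{z}}_{S^c}\Vert_\infty \le \Vert\mathring{\bm{X}}_{S^c}^\top\mathring{\bm{X}}_S(\mathring{\bm{X}}_S^\top\mathring{\bm{X}}_S)^{-1}\Vert_\infty + \lambda^{-1}\Vert\mathring{\bm{X}}_{S^c}^\top\bm{\Pi}_S^\perp\Vec{\bm{w}}\Vert_\infty$; the first term is $\le 1-\eta$ by C2, while each coordinate of the second is Gaussian with variance $\le\sigma^2\mu_{\mathring{\bm{X}}}$ (both $\bm{\Pi}_S^\perp$ and $\bm{I}_c\otimes\tilde{\bm{X}}$ are contractions), so the prescribed $\lambda\ge\frac{2\sigma\sqrt{\mu_{\mathring{\bm{X}}}}}{\eta}\sqrt{\log cn}$ makes $\lambda\eta$ at least $2\sqrt{\log cn}$ standard deviations, and that coordinate exceeds $\lambda\eta$ with probability $\le 2(cn)^{-2}$. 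For (b) it suffices that $\Vert\check{\Vec{\bm{\gamma}}}_S-\Vec{\bm{\gamma}}^*_S\Vert_\infty<\Vec{\bm{\gamma}}^*_{\min}$; by the triangle inequality on the closed form this follows from $\Vert(\mathring{\bm{X}}_S^\top\mathring{\bm{X}}_S)^{-1}\mathring{\bm{X}}_S^\top\Vec{\bm{w}}\Vert_\infty < \Vec{\bm{\gamma}}^*_{\min} - \lambda\Vert(\mathring{\bm{X}}_S^\top\mathring{\bm{X}}_S)^{-1}\mathrm{sign}(\Vec{\bm{\gamma}}^*_S)\Vert_\infty$, and C3 makes the right-hand side exceed $\lambda\eta/\sqrt{C_{\min}\mu_{\mathring{\bm{X}}}}$; since the $k$-th coordinate of $(\mathring{\bm{X}}_S^\top\mathring{\bm{X}}_S)^{-1}\mathring{\bm{X}}_S^\top\Vec{\bm{w}}$ has variance $\sigma^2[(\mathring{\bm{X}}_S^\top\mathring{\bm{X}}_S)^{-1}]_{kk}\le\sigma^2/C_{\min}$, the same $\lambda$ again puts the threshold at $\ge 2\sqrt{\log cn}$ standard deviations. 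A single union bound over the at most $cn$ Gaussian coordinates appearing in (a) and (b) caps the total failure probability at $2(cn)^{-1}$; part~1) of the theorem then uses only (a), hence C1 and C2, and part~2) adds (b), hence C3.

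I expect the crux to be Step~1 --- making the PDW certificate argument airtight, in particular the claim that strict dual feasibility together with C1 promotes the constructed candidate to the \emph{unique} global optimum (one checks that any optimum must be supported in $S$ because the optimal correlations on $S^c$ are strictly below $\lambda$, and then uses strict convexity of the objective restricted to $\mathbb{R}^S$). The remaining work is mostly bookkeeping: propagating the degenerate covariance of $\Vec{\bm{w}}=(\bm{I}_c\otimes\tilde{\bm{X}})\Vec{\bm{\varepsilon}}$ through the quadratic forms, which simplifies because idempotence of $\tilde{\bm{X}}$ yields $\mathring{\bm{X}}_S^\top(\bm{I}_c\otimes\tilde{\bm{X}})\mathring{\bm{X}}_S=\mathring{\bm{X}}_S^\top\mathring{\bm{X}}_S$, so the variances collapse to the clean expressions $\sigma^2[(\mathring{\bm{X}}_S^\top\mathring{\bm{X}}_S)^{-1}]_{kk}$ and $\le\sigma^2\mu_{\mathring{\bm{X}}}$ used above.
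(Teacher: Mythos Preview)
Your proposal follows essentially the same route as the paper: both are the Wainwright primal--dual witness argument, with the paper packaging the deterministic certificate as a separate Proposition and the concentration as a separate Lemma, while you do both in one pass. The variance bookkeeping you flag (idempotence of $\tilde{\bm X}$ collapsing the quadratic forms) is handled identically in the paper's Lemma.

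Two small points where your write-up is looser than the paper's:

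\emph{Part 1 without sign consistency.} You construct the candidate by pinning $\hat{\bm v}_S=\mathrm{sign}(\Vec{\bm\gamma}^*_S)$ and then claim that (a) alone yields $\hat{C^c}\subseteq C^c$. But with the sign pinned, $(\check{\Vec{\bm\gamma}},\check{\bm z})$ is only a KKT point once (b) also holds; if (b) fails the subgradient condition on $S$ is violated and you have no certificate. The paper avoids this by requiring the strict dual feasibility inequality to hold for \emph{all} $\hat{\bm v}_S\in[-1,1]^{|S|}$ (equivalently: take $\check{\Vec{\bm\gamma}}_S$ to be the restricted-lasso solution on $S$, whose subgradient is whatever it is). Your bound for (a) already uses the matrix $\Vert\cdot\Vert_\infty$ norm and hence is uniform over $\hat{\bm v}_S$, so the fix is only in the construction, not the analysis.

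\emph{Coordinate support versus instance support.} You write $\hat{C^c}=\mathrm{supp}(\check{\Vec{\bm\gamma}})\subseteq S=C^c$, but $\hat S=\mathrm{supp}(\hat{\Vec{\bm\gamma}})$ lives in $[cn]$ (vectorized coordinates) while $C^c$ indexes instances in $[n]$. The paper closes with the explicit (easy) implication $\hat S\subseteq S\Rightarrow\hat{C^c}\subseteq C^c$: if instance $i$ is clean then every coordinate of $\bm\gamma_i^*$ is zero, hence all those coordinates lie in $S^c\subseteq\hat S^c$, hence $i\in\hat C$. You should state this translation rather than equate the two objects.
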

We present the proof in the appendix, following the treatment in~\cite{wainwright2009sharp, xu2021evaluating}.
In this theorem,
C1 is necessary to get a unique solution, and in our case is mostly satisfied with the natural assumption that  the clean data is the majority in the training data.
If C2 holds, the estimated noisy data is the subset of truly noisy data.
This condition is the key to ensuring the success of SPR, which requires divergence between clean and noisy data such that we cannot represent clean data with noisy data.
If C3 further holds, the estimated noisy data is exactly all the truly noisy data.
C3 requires the error measured by $\bm{\gamma}_i$ is large enough to be identified from random noise.

\section{\label{sec:Knockoffs-SPR}Controlled Clean Sample Selection}

In the last section, we stop the solution path at $\lambda$ such that 50\% samples are selected as clean data. If this happens to be the rate of clean data, Theorem~\ref{thm:original} shows that our SPR can identify the clean dataset $\mathcal{C}$ under the irrepresentable condition. However, the irrepresentable condition and the information of the ground-truth clean set $\mathcal{C}$ are practically unknown, making this theory hard to be used in the real life. Particularly, with $|\mathcal{C}^c|$ unknown, the algorithm can stop at an improper time such that the noise rate of the selected clean data $\hat{\mathcal{C}}$ can be still high, making the next-round trained model corrupted a lot by noisy patterns.

To resolve the problem of false selection in SPR
, we in this section propose a data-adaptive early stopping method for the solution path, that targets controlling the expected noise rate of the selected data dubbed as \emph{False-Selection-Rate} (FSR) under the desired level $q$ ($0 < q < 1$): 
\begin{equation}
\label{eq:FSR}
\mathrm{FSR}=\mathbb{E}\left[\frac{\#\left\{ j: j\not\in \mathcal{H}_0 \cap \hat{\mathcal{C}}\right\} }{\#\left\{ j:j\in\hat{\mathcal{C}}\right\} \lor1}\right],
\end{equation}
where $\hat{\mathcal{C}}=\{j:\hat{\gamma}_j= 0\}$ is the recovered clean set of $\bm{\gamma}$, and $\mathcal{H}_0: \bm{\gamma}^*_i = 0$ denotes the null hypothesis, \emph{i.e.}, the sample $i$ belonging to the clean dataset. Therefore, the FSR in Eq.~\eqref{eq:FSR} targets controlling the false rate among selected null hypotheses, which is also called the expected rate of the type-II error in hypothesis testing.

\subsection{Knockoffs-SPR}
In SPR, we compare statistics of different samples derived from the noisy data indicator $\gamma$.
While it is a straightforward method to assess the credibility of the annotated label, it can also be advantageous to explore an alternative perspective:
How plausible is the annotated label compared to a randomly permuted label?
This notion prompts us to develop a novel statistics that not only preserves the credibility compared with other data but also maintains its consistency in relation to a permuted label. 
Specifically, this statistics allows us to effectively control the FSR.

Formally, to achieve the FSR control, we propose the Knockoffs-SPR for clean sample selection.
Our method is inspired by knockoff methods~\cite{barber2015controlling, dai2016knockoff, xu2016false, barber2019knockoff, cao2021control} with the different focus that we target selecting clean labels via permutation instead of constructing knockoff features to select explanatory variables.
Specifically, under model~\eqref{eq:model} we permute the label for each data and construct the permutation $\tilde{\bm{y}}$.
Then model~\eqref{eq:model} can be solved for $\bm{y}$ and $\tilde{\bm{y}}$ to obtain the solution paths $\bm{\gamma}(\lambda)$ and $\tilde{\bm{\gamma}}(\lambda)$, respectively. 
We will show that this construction can pick up clean data from noisy ones, by comparing the selecting time (Eq.~\eqref{eq:select}) between $\bm{\gamma}(\lambda)$ and $\tilde{\bm{\gamma}}(\lambda)$ for each data. 
On the basis of this construction, we propose to partition the whole dataset into two disjoint parts, with one part for estimating 
$\bm{\beta}$
and the other for learning $\bm{\gamma}(\lambda)$ and $\tilde{\bm{\gamma}}(\lambda)$. We will show that the independent structure with such a  data partition enables us to construct the comparison statistics whose signs among alternative hypotheses (noisy data) are the independent Bernoulli processes, which is crucial for FSR control.

Specifically, we split the whole data $\mathcal{D}$ into $\mathcal{D}_1:=(\bm{X}_1,\bm{Y}_1)$ and $\mathcal{D}_2:=(\bm{X}_2,\bm{Y}_2)$ with $n_i:=|\mathcal{D}_i|$, and implement Knockoffs-SPR on both $\mathcal{D}_1$ and $\mathcal{D}_2$. In the following, we only introduce the procedure on $\mathcal{D}_2$, as the procedure for $\mathcal{D}_1$ shares the same spirit. The procedure is composed of three steps: \textbf{i)} \emph{estimate $\bm{\beta}$ on $\mathcal{D}_1$}; \textbf{ii)} \emph{estimate $(\bm{\gamma}(\lambda), \tilde{\bm{\gamma}}(\lambda))$ on $\mathcal{D}_2$}; and  \textbf{iii)} \emph{construct the comparison statistics and selection filters}. We leave detailed discussions for each step in Sec.~\ref{sec.discuss}.

\textbf{Step i): Estimating $\bm{\beta}$ on $\mathcal{D}_1$.} 
Our target is to provide an estimate of $\bm{\beta}$ that is independent of $\mathcal{D}_2$.
The simplest strategy is to use the standard OLS estimate to obtain $\hat{\bm{\beta}}_1$.
However, this estimate may not be accurate since it is corrupted by noisy samples. For this consideration, we first run SPR on $\mathcal{D}_1$ to get clean data and then solve $\bm{\beta}$ via OLS on the estimated clean data.

\textbf{Step ii): Estimating $\left(\bm{\gamma}(\lambda),\tilde{\bm{\gamma}}(\lambda)\right)$ on $\mathcal{D}_2$.} 
After obtaining the solution 
$\hat{\bm{\beta}}_1$ on $\mathcal{D}_1$
, we learn the $\bm{\gamma}(\lambda)$ on $\mathcal{D}_2$: 
\begin{equation}
\label{eq:spr-d2}
\frac{1}{2}\left\Vert \bm{Y}_2-\bm{X}_2\hat{\bm{\beta}}_1-\bm{\gamma}_{2}\right\Vert_{\mathrm{F}}^{2}+P(\bm{\gamma}_{2};\lambda).
\end{equation}
For each one-hot encoded vector $\bm{y}_{2,j}$, we randomly permute the position of 1 and obtain another one-hot vector $\tilde{\bm{y}}_{2,j}\neq\bm{y}_{2,j}$. 
For clean data $j$, the $\tilde{\bm{y}}_{2,j}$ turns to be a noisy label; while for noisy data, the $\tilde{\bm{y}}_{2,j}$ is switched to another noisy label with probability $\frac{c-2}{c-1}$ or clean label with probability $\frac{1}{c-1}$. 
After obtaining the permuted matrix as $\tilde{\bm{Y}}_2$, we learn the solution paths $\left(\bm{\gamma}_{2}(\lambda), \tilde{\bm{\gamma}}_2(\lambda)\right)$ using the same algorithm as SPR via: 
\begin{equation}
\label{eq:spr-permute}
\begin{cases}
\frac{1}{2}\left\Vert \bm{Y}_2-\bm{X}_2\hat{\bm{\beta}}_1-\bm{\gamma}_2\right\Vert _{\mathrm{F}}^{2}+\sum_j P(\bm{\gamma}_{2,j};\lambda),\\
\frac{1}{2}\left\Vert \tilde{\bm{Y}}_2-\bm{X}_2\hat{\bm{\beta}}_1-\tilde{\bm{\gamma}}_2\right\Vert _{\mathrm{F}}^{2}+\sum_j P(\tilde{\bm{\gamma}}_{2,j};\lambda).
\end{cases}
\end{equation}

\textbf{Step iii): Comparison statistics and selection filters.} After obtaining the solution path $\left(\bm{\gamma}_{2}(\lambda), \tilde{\bm{\gamma}}_2(\lambda)\right)$, we define sample significance scores  with respect to $\bm{y}_{2,j}$ and $\tilde{\bm{y}}_{2,j}$ of  each $j$ respectively, as the selection time: $Z_j := \sup\{\lambda: \Vert \bm{\gamma}_{2,j}(\lambda) \Vert_2 \neq 0\}$ and $\tilde{Z}_j := \sup\{\lambda: \Vert \tilde{\bm{\gamma}}_{2,j} (\lambda) \Vert_2 \neq 0\}$. With $Z_j, \tilde{Z}_j$, we define the $W_j$ as: 
\begin{equation}
\label{eq:w}
    W_j := Z_j \cdot \mathrm{sign}(Z_j - \tilde{Z}_j). 
\end{equation}

Based on these statistics, we define a data-dependent threshold $T$ as 
\begin{equation}
\label{eq:T}
T=\max\left\{ t>0:\frac{1 + \#\left\{ j:0<W_{j}\leq t\right\} }{\#\left\{ j:-t\leq W_{j}<0\right\} \lor 1}\leq q\right\},
\end{equation}
or \ensuremath{T=0} if this set is empty,
where $q$ is the pre-defined upper bound.
Our algorithm will select the clean subset identified by 
\begin{equation}
\label{eq:clean-set}
\mathcal{C}_2\coloneqq\{j:-T\leq W_j <0\}.
\end{equation}
In practice, after calculating $\{W_j\}_{j=1}^{n_2}$,  we arrange them in descending order based on their magnitudes $|W_j|$.
For a given $q$, 
our process begins with an initial $T$ set to the maximum magnitude, i.e., $T = \max_j |W_j|$. We then assess whether the condition in Eq.~\eqref{eq:T} is met. If it's not satisfied, we remove the largest value, $\max_j |W_j|$, from consideration and redefine $T$ as the maximum magnitude among the remaining values in $\{|W_j|\}$.
This iterative procedure continues until we identify a feasible $T$ such that the inequality holds. 
This ensures that we have found the highest attainable $T$ that controls FSR in the desired level $q$.

Empirically, $T$ may be equal to $0$ if the threshold $q$ is sufficiently small. In this regard, no clean data are selected, which is meaningless.
Therefore, we start with a small $q$ and iteratively increase $q$ and calculate $T$, until an attainable $T$ such that $T > 0$ to bound the FSR as small as possible.
In practice, when the FSR cannot be bounded by $q=50\%$, we will end the selection and simply select half of the most possible clean examples via $\{W_j\}$. 

The whole procedure of Knockoffs-SPR is shown in Algorithm~\ref{alg:Knockoffs-SPR-single}.
The Knockoffs-SPR construct randomly permuted labels as controls to select truly clean samples while controlling the FSR, \emph{i.e.}, the expected ratio of noisy samples among selected clean samples. Intuitively, it selects the clean sample with its annotated label being more credible than its copy, \emph{i.e.}, a randomly permuted label. To measure the credibility, we compute both $Z_j$ and $\tilde{Z}_j$. Smaller $Z$ indicates larger credibility of the label $Y$. As a result, we take those with small negatives of $W := Z \cdot \mathrm{sign}(Z - \tilde{Z})$ as clean samples. Based on $W$, we then compute a data-dependent threshold $T$ to control the FSR.

The rigorous proof of FSR relies on an independent structure between the magnitude $|W|$ that is determined by $\hat{\bm{\beta}}$ and the $\mathrm{sign}(W)$ that is determined by $(\bm{\gamma},\tilde{\bm{\gamma}})$. This motivates us to partition the full dataset into two independent parts, with the 1st step estimating $\hat{\bm{\beta}}$ and the 2nd step calculating $(\bm{\gamma},\tilde{\bm{\gamma}})$. 

In summary, the Knockoff-SPR needs to partition the dataset into two parts and calculate $W$ and $T$. Specifically, in step i, we break down the estimation of both $\bm{\beta}$ and $\bm{\gamma}$ with data partition to introduce an independent structure. We then use the first part $\mathcal{D}_1$ to calculate $\hat{\bm{\beta}}$; moving to step ii, we construct the randomly permuted label $\tilde{\bm{Y}}$ (line 2), as knockoffs copy and calculate $W$ using the second part $\mathcal{D}_2$ (line 3). 
We then obtain $T$ by iteratively increasing $q$ until we can select the clean sample set (line 4-7). 
Finally, we obtain the clean sample set $\mathcal{C}_2 := \{i: - T \leq W_i(t) <0 \}$. 
Note that we empirically implement $\tilde{\bm{Y}}$ as the most confident permuted label by the network, see Sec.~\ref{sec:learning-knockoff} for details.

\begin{figure}
\begin{algorithm}[H]
\caption{\label{alg:Knockoffs-SPR-single}Knockoffs-SPR}
\begin{algorithmic}[1]
\REQUIRE subsets $\mathcal{D}_1$ and $\mathcal{D}_2$, $q=0.02$, $T=0$.
\ENSURE Clean set $\mathcal{C}_2$ of $\mathcal{D}_2$.
\STATE Use $\mathcal{D}_1$ to  get $\hat{\bm{\beta}}(\mathcal{D}_1)$;
\STATE Generate most-confident permuted label $\tilde{\bm{Y}}_2$ from $\mathcal{D}_2$;
\STATE Solve Eq.~\eqref{eq:spr-permute} for $\mathcal{D}_2$ and generate $\{W_i\}$ by Eq.~\eqref{eq:w};
\WHILE{$q<0.5$ and $T=0$}
\STATE Compute $T$ by Eq.~\eqref{eq:T};
\STATE $q = q + 0.02$;
\ENDWHILE
\STATE Construct  $\mathcal{C}_2$ via samples in Eq.~\eqref{eq:clean-set};
\RETURN $\mathcal{C}_2$.
\end{algorithmic}
\end{algorithm}
\end{figure}

\subsection{\label{sec:discussion-Knockoffs-SPR}Statistical Analysis about Knockoffs-SPR}
\label{sec.discuss}
In this part, we present the motivations and intuitions of each step in Knockoffs-SPR.

\textbf{Data Partition}.
Knockoffs-SPR partitions the dataset $\mathcal{D}$ into two subset $\mathcal{D}_1$ and $\mathcal{D}_2$.
This step decomposes the dependency of the estimate of $\bm{\beta}$ and $\bm{\gamma}$ in that we use $\mathcal{D}_1$/$\mathcal{D}_2$ to estimate $\bm{\beta}$/$\bm{\gamma}$, respectively. 
Then $\hat{\bm{\beta}}(\mathcal{D}_1)$ is independent of $\hat{\bm{\gamma}}(\mathcal{D}_2)$ if $\mathcal{D}_1$ and $\mathcal{D}_2$ are disjoint.
This construction induces the  independent estimates $\{\mathrm{sign}(W_j)\}_{j=1}^{n_2}$, making it provable for FSR control on $\mathcal{D}_2$.

\textbf{Permutation}.
As we discussed in step ii, when the original label is clean, its permuted label will be a noisy label. On the other hand, if the original label is noisy, its permuted label changes to clean with probability $\frac{1}{c-1}$ and noisy with probability $\frac{c-2}{c-1}$, where $c$ denotes the number of classes. Note that $\bm{\gamma}$ of noisy data is often selected earlier than that of clean data in the solution path. This implies larger $Z$ values for noisy data than those for clean data. As a result, according to the definition of $W$, a clean sample will ideally have a small negative of $W:=Z \cdot \mathrm{sign}(Z-\tilde{Z})$, where $Z$ and $\tilde{Z}$ respectively correspond to the clean label and noisy label. In contrast for a noisy sample, the $W$ tends to have a large magnitude and has approximately equal probability to be positive or negative. Such a different behavior of $W$ between clean and noisy data can help us to identify clean samples from noisy ones.

\textbf{Asymmetric comparison statistics $W$}.
The classical way to define comparison statistics is in 
a symmetric manner, i.e., $W_j\coloneqq Z_j \vee \tilde{Z}_j\cdot\mathrm{sign}(Z_j-\tilde{Z}_j)$.
In this way, a clean sample with a noisy permuted label tends to have a large $|W_j|$, as we expect the noisy label to have a large $\tilde{Z}_j$.
However, this is against our target as we only require clean samples to have small magnitude. Thus we design asymmetric statistics that only consider the magnitude of the original labels.

To see the asymmetric behavior of $W$ for noisy and clean data, we consider the Karush–Kuhn–Tucker (KKT) conditions of Eq.~\eqref{eq:spr-permute} with respect to $(\bm{\gamma}_{2,j},\tilde{\bm{\gamma}}_{2,j})$
\begin{subequations}
\label{eq:kkt}
\begin{align}
\bm{\gamma}_{2,j}+\frac{\partial P(\bm{\gamma}_{2,j};\lambda)}{\partial\bm{\gamma}_{2,j}}=\bm{x}_{2,j}^{\top}(\bm{\beta}^*-\hat{\bm{\beta}}_1)+\bm{\gamma}_{2,j}^{*}+\bm{\varepsilon}_{2,j},\label{eq:kkt-1} \\
\tilde{\bm{\gamma}}_{2,j}+\frac{\partial P(\tilde{\bm{\gamma}}_{2,j};\lambda)}{\partial\tilde{\bm{\gamma}}_{2,j}}=\bm{x}_{2,j}^{\top}(\bm{\beta}^{*}-\hat{\bm{\beta}}_1)+\tilde{\bm{\gamma}}^{*}_{2,j}+\tilde{\bm{\varepsilon}}_{2,j}\label{eq:kkt-2},
\end{align}
\end{subequations}
where $\bm{\varepsilon}_{2,j} \sim_{i.i.d} \tilde{\bm{\varepsilon}}_{2,j}$, $\Vert\bm{\gamma}_{2,j}^{*}\Vert = \Vert\tilde{\bm{\gamma}}^*_{2,j}\Vert$ if both $\bm{y}_{2,j}$ and $\tilde{\bm{y}}_{2,j}$ are noisy, and $P(\bm{\gamma}_{2,j};\lambda):=\lambda \Vert \bm{\gamma}_{2,j}\Vert$ as an example. By conditioning on $\hat{\bm{\beta}}_1$ and denoting $\bm{a}_j:=\bm{x}_{2,j}^{\top}(\bm{\beta}^*-\hat{\bm{\beta}}_1)$, we have that 
\begin{equation}
    P(W_j>0) = P(\Vert\bm{a}_j+\bm{\gamma}_{2,j}^{*}+\bm{\varepsilon}_{2,j}\Vert>\Vert\bm{a}_j+\tilde{\bm{\gamma}}_{2,j}^{*}+\tilde{\bm{\varepsilon}}_{(2),j}\Vert).
\end{equation}
Then it can be seen that if $j$ is clean, we have $\bm{\gamma}_{2,j}^{*} = 0$. Then $Z_j$ tends to be small and besides, it is probable to have $Z_j < \tilde{Z}_j$ if $\hat{\bm{\beta}}_1$ can estimate $\beta^*$ well. As a result, $W_j$ tends to be a small negative. On the other hand, if $j$ is noisy, then $Z_j$ tends to be large for $\bm{\gamma}_j$ to account for the noisy pattern, and besides, it has equal probability between $Z_j < \tilde{Z}_j$ and $Z_j \geq \tilde{Z}_j$ when $\tilde{\bm{y}}_{2,j}$ is switched to another noisy label, with probability $\frac{c-2}{c-1}$. So $W_j$ tends to have a large value and besides, 
\begin{align}
    & P(W_j > 0) = P(W_j>0|\tilde{\bm{y}}_{2,j} \text{ is noisy})P(\tilde{\bm{y}}_{2,j} \text{ is noisy}) \nonumber \\
    & + P(W_j>0|\tilde{\bm{y}}_{2,j} \text{ is clean})P(\tilde{\bm{y}}_{2,j} \text{ is clean}) 
= \frac{1}{2} \cdot \frac{c-2}{c-1} \nonumber \\
& + P(W_i>0|\tilde{\bm{y}}_{2,j} \text{ is clean}) \cdot \frac{1}{c-1}, 
    \label{eq:w-prob}
\end{align}
which falls in the interval of $ \left[\frac{c-2}{c-1} \cdot \frac{1}{2},\frac{c}{c-1} \cdot \frac{1}{2}\right]$. That is to say, $P(W_j>0) \approx \frac{1}{2}$. In this regard, the clean data corresponds to small negatives of $W$ in the ideal case, which can help to discriminate noisy data with large $W$ with almost equal probability to be positive or negative.

\begin{rem*}
    For noisy $\bm{y}_{2,j}$, we have $P(W_j>0|\tilde{\bm{y}}_{2,j} \text{ is noisy}) = 1/2$ by assuming $\Vert\bm{\gamma}_{2,j}^{*}\Vert = \Vert\tilde{\bm{\gamma}}^*_{2,j}\Vert$. However, it may not hold in practice when $\bm{y}_{2,j}$ corresponds to the noisy pattern that has been learned by the model. In this regard, it may have $|\bm{\gamma}_{2,j}^{*}| < |\tilde{\bm{\gamma}}^*_{2,j}|$ for a randomly permuted label $\tilde{\bm{y}}_{2,j}$. To resolve this problem, we instead set the permutation label as the most confident candidate of the model, please refer to Sec.~\ref{sec:learning-knockoff} for details. Besides, if $\hat{\bm{\beta}}_1$ can accurately estimate $\bm{\beta}^*$, according to KKT conditions in Eq.~\eqref{eq:kkt}, we have $P(W_j>0) < 1/2$. That is $W_j$ tends to be negative for the clean data, which is beneficial for clean sample selection. 
\end{rem*}

\textbf{Data-adaptive threshold}.
The proposed data-adaptive threshold $T$ is directly designed to control the FSR.
Specifically, the FSR defined in Eq.~\eqref{eq:FSR} is equivalent to 
\begin{equation}
\label{eq:FSR-in-spr}
\mathrm{FSR}(t)=\mathbb{E}\left[\frac{\#\left\{ j:\bm{\gamma}_{j}\neq0\textrm{ and }-t\leq W_j<0\right\} }{\#\left\{ j:-t\leq W_j<0\right\} \lor1}\right],
\end{equation}
where the denominator denotes the number of selected clean data according to Eq.~\eqref{eq:clean-set} and the nominator denotes the number of falsely selected noisy data. This form of Eq.~\eqref{eq:FSR-in-spr} can be further decomposed into,
{\small
\begin{align}
& \mathbb{E}\left[\frac{\#\left\{\bm{\gamma}_{j}\neq0, \ -t\leq W_j<0\right\}}{1+\#\left\{\bm{\gamma}_{j}\neq0, \ 0< W_j\leq t\right\}} \cdot \frac{1+\#\left\{\bm{\gamma}_{j}\neq0, \ 0< W_j\leq t\right\}}{\#\left\{-t\leq W_j<0\right\} \lor1}\right] \nonumber \\
& \leq \mathbb{E}\left[\frac{\#\left\{\bm{\gamma}_{j}\neq0, \ -t\leq W_j<0\right\}}{1+\#\left\{\bm{\gamma}_{j}\neq0, \ 0< W_j\leq t\right\}} \frac{1+\#\left\{0< W_j\leq t\right\}}{\#\left\{-t\leq W_j<0\right\} \lor1}\right] \nonumber \\
& \leq \mathbb{E}\left[\frac{\#\left\{\bm{\gamma}_{j}\neq0, \ -t\leq W_j<0\right\}}{1+\#\left\{\bm{\gamma}_{j}\neq0, \ 0< W_j\leq t\right\}} q\right], \label{eq:fsr-decompose}
\end{align}
}

where the last inequality comes from the definition of $T$ in Eq.~\eqref{eq:T}. To control the FSR, it suffices to bound $\mathbb{E}\left[\frac{\#\left\{\bm{\gamma}_{j}\neq0, \ -t\leq W_j<0\right\}}{1+\#\left\{\bm{\gamma}_{j}\neq0, \ 0< W_j\leq t\right\}}\right]$. Roughly speaking, this term means the number of negative $W$ to the number of positive $W$, among noisy data. Since $W$ for noisy data has approximately equal probability to be positive/negative as mentioned earlier, intuitively we have this term $\approx \frac{1}{2}$. Formally, we construct a martingale process of $\mathbbm{1}(W_i>0)$ among noisy data.
We leave these details in the appendix.

\subsection{FSR Control of Knockoffs-SPR}
Our target is to show that $\mathrm{FSR}\leq q$ with our data-adaptive threshold $T$ in Eq.~\eqref{eq:T}.
Our main result is as follows:
\begin{thm}[FSR control]
Under model~\eqref{eq:model}, for $c$-class classification task, and for all $0<q\leq1$, the solution of Knockoffs-SPR holds
\begin{equation}
\mathrm{FSR}(T)\leq q
\end{equation}
with the threshold $T$ for two subsets defined respectively as
 \begin{equation*}
 T_i=\max\left\{ {\scriptstyle t\in\mathcal{W}} 
 :\frac{1+\#\left\{ j:0<W_{j}\leq t\right\} }{\#\left\{ j:-t\leq W_{j}<0\right\} \lor 1}\leq  \frac{c-2+2\kappa_i}{2(c-2\kappa_i)}q\right\}.
 \end{equation*}
where 
$\kappa_i\coloneqq\min_{\bm{\gamma}_j^*\neq 0,j\in\mathcal{D}_i}\mathbb{P}(W_j>0 | \tilde{\bm{\gamma}}_j^*=0 )\in (0,1)$.

For multi-class setting $c>2$, we also have well-defined $T_i$ as 
\begin{equation*}
\label{eq:T-multi}
T_i=\max\left\{ t\in\mathcal{W}:\frac{1+\#\left\{ j:0<W_{j}\leq t\right\} }{\#\left\{ j:-t\leq W_{j}<0\right\} \lor 1}\leq \frac{c-2}{2c}q\right\}.
\end{equation*}
\end{thm}
We present the proof in the appendix.
The coefficient $1/2$ comes from the subset-partition strategy that we run Knockoffs-SPR on two $\mathcal{D}_1$ and $\mathcal{D}_2$, and the term $\frac{c-2+2\kappa}{(c-2\kappa)}$ comes from the upper-bound of the first part in Eq.~\eqref{eq:fsr-decompose}.
This theorem tells us that FSR can be controlled by the given threshold $q$ using the procedure of Knockoffs-SPR. 
Compared to SPR, this procedure is more practical and useful in real-world experiments and we demonstrate its utility in Sec.~\ref{sec:selection-quality} for more details.

\begin{figure}
\begin{algorithm}[H]
\caption{\label{alg:Knockoffs-SPR}Knockoffs-SPR on full training set}
\begin{algorithmic}[1]
\REQUIRE $\mathcal{D}=\{(\bm{x}_i, \bm{y}_i)\}_{i=1}^n$, class size $N$ sample size $m$, (Optional) \textit{clean set} $\mathcal{C}$.
\ENSURE  \textit{clean set} $\mathcal{C}$. 
\STATE Determine classes of group from Eq.~\eqref{eq:simialrity} and Eq.~\eqref{eq:prototype};
\STATE Uniformly construct \textit{pieces} with samples of size $N\times m$;
\FOR{each \textit{piece}}
\STATE Partition \textit{piece} into \textit{sub-pieces} $\mathcal{A}$ and $\mathcal{B}$ ;
\STATE Run  Algorithm~\ref{alg:Knockoffs-SPR-single} ($\mathcal{B}, \mathcal{A}$) on $\mathcal{A}$ to get \textit{clean-set-}$\mathcal{A}$;
\STATE Run  Algorithm~\ref{alg:Knockoffs-SPR-single} ($\mathcal{A}, \mathcal{B}$) on $\mathcal{B}$ to get \textit{clean-set-}$\mathcal{B}$;
\STATE Concat \textit{clean-set-}$\mathcal{A}$ and \textit{clean-set-}$\mathcal{B}$ to get \textit{clean-set-piece};
\ENDFOR
\STATE Concat \textit{clean-set-piece}s to get \textit{clean set} $\mathcal{C}$;
\RETURN \textit{clean set} $\mathcal{C}$.
\end{algorithmic}
\end{algorithm}
\end{figure}

\section{\label{sec:learning}Learning with Knockoffs-SPR}
In this section, we introduce how to incorporate Knockoffs-SPR into the training of neural networks.
We first introduce several implementation details of Knockoffs-SPR, then we introduce a splitting algorithm that makes Knockoffs-SPR scalable to large-scale datasets. 
Finally, we discuss some training strategies to better utilize the selected clean data.

\subsection{Knockoffs-SPR in Practice}
\label{sec:learning-knockoff}
We introduce several strategies to improve FSR control and the power of selecting clean samples, which are inspired by different behaviors of $W$ between noisy and clean samples. Ideally, for a clean sample $j$, $W_j$ is expected to be a small negative; if $j$ is noisy data, $W_j$ tends to be large and is approximately 50\% to be positive or negative, as shown in Eq.~\eqref{eq:w-prob}. To achieve these properties for better clean sample selection, the following strategies are proposed, in the procedure of \emph{feature extractor}, \emph{data-preprocessing}, \emph{label permutation strategy}, \emph{estimating $\bm{\beta}$ on $\mathcal{D}_1$}, and \emph{clean data identification in Eq.~\eqref{eq:T},~\eqref{eq:clean-set}}.

\textbf{Feature Extractor.} A good feature extractor is essential for clean sample selection algorithms. In our experiments, we adopt the self-supervised training method SimSiam~\cite{chen2021exploring} to pre-train the feature extractor, to make $\bm{X}$ well encode the information of the training data in the early stages.

\textbf{Data Preprocessing.} We implement PCA on the features extracted by neural network for dimension reduction. This can make $\bm{X}$ of full rank, which ensures the identifiability of $\hat{\bm{\beta}}$ in SPR. Besides, such a low dimensionality can make the model estimate $\bm{\beta}$ more accurately. According to the KKT conditions Eq.~\eqref{eq:kkt}, we have that $W_j$ of clean data $j$ tends to be negative with small magnitudes. In this regard, the model can have better power of clean sample selection, \emph{i.e.}, selecting more clean samples while controlling FSR. 

\textbf{Label Permutation Strategy.} Instead of the random permutation strategy, our Knckoff-SPR permutes the label as the most-confident candidate provided by the model, for FSR consideration especially when the noise rate is high or some noisy pattern is dominant in the data. Specifically, if the pattern of some noisy label $\bm{y}_{2,j}$ is learned by the model, then $\bm{\gamma}_{2,j}^*$ may have a smaller magnitude than that of $\tilde{\bm{\gamma}}_{2,j}^*$ for a randomly permuted label $\tilde{\bm{y}}_{2,j}$ that may not be learned by the model, violating $P(W_j>0|\tilde{\bm{y}}_{2,j})=1/2$ and hence $P(W_j>0)\approx 1/2$ in practice. The most confident permutation alleviate this problem, as the most confident label $\tilde{\bm{y}}_{2,j}$ can naturally have a small magnitude of $\tilde{\bm{\gamma}}_{2,j}^*$. 

\textbf{Estimating $\bm{\beta}$ on $\mathcal{D}_1$.} We implement SPR as the first step to learn $\bm{\beta}$ on $\mathcal{D}_1$. Compared to vanilla OLS, the SPR can remove some noisy patterns from data, and hence can achieve an accurate estimate of $\bm{\beta}$. Similar to the \emph{data processing} step, such an accurate estimation can improve the power of selecting clean samples. 

\textbf{Clean data identification in Eq.~\eqref{eq:T},~\eqref{eq:clean-set}.} We calculate $T$ among $W$ for each class, and identify the clean subset for each class, to improve the power of clean data for each class. In practice, since some classes may be easier to learn than others, the $W_j$ for $j$ in these classes have smaller magnitudes. Therefore, data from these 
classes will take the main proportion if we calculate $T$ and identify $\mathcal{C}_2$ among all classes. With this design, the clean data are more balanced, which facilitates the training in the next epochs.

\subsection{Scalable to Large Dataset}

The computation cost of the sample selection algorithm increases with the growth of the training sample, making it not scalable to large datasets. To resolve this problem, we propose to split the total training set into many pieces, each of which contains a small portion of training categories with a small number of training data. With the splitting strategy, we can run the Knockoffs-SPR on several pieces in parallel and significantly reduce the running time. For the splitting strategy, we notice that the key to identifying clean data is leveraging different behavior in terms of the magnitude and the sign of $W$. Such a difference can be alleviated if the patterns from clean classes are similar to the noisy ones, which may lead to unsatisfactory recall/power of identifying the clean set. 
This motivates us to group  similar categories together, to facilitate the discrimination of clean data from noisy ones.

Formally speaking, we define the similarity between the class $i$ and $j$ as
\begin{equation}
\label{eq:simialrity}
s(i,j)=\bm{p}_i^{\top}\bm{p}_j,
\end{equation} 
where $\bm{p}$ represents the class prototype. To obtain $\bm{p}_i$ for the class $i$, we take the clean features $\bm{x}_i$ of each class extracted by the network along the training iteration, and average them to get the class prototype $\bm{p}_c$ after the current training epoch ends, as
\begin{equation}
\label{eq:prototype}
\bm{p}_c=\frac{\sum_{i=1}^n \bm{x}_i\mathbbm{1}(y_i=c,i\in \mathcal{C})}{\sum_{i=1}^n \mathbbm{1}(y_i=c,i\in \mathcal{C})},
\end{equation}
Then the most similar classes are grouped together. 
In the initialization step when the clean set has not been estimated yet, we simply use all the data to calculate the class prototypes.
In our experiments, each \emph{group} is designed to have 10 classes. 

For the instances in each group, we split the training data of each class in a balanced way such that each \emph{piece} contains the same number of instances for each class.
The number is determined to ensure that the clean pattern remains the majority in the piece, such that optimization can be done easily.
In practice, we select 75 training data from each class to construct the piece.
When the class proportion is imbalanced in the original dataset, we adopt the over-sampling strategy to sample the instance of each class with less training data multiple times to ensure that each training instance is selected once in some piece.
The pipeline of our splitting algorithm is described in Algorithm~\ref{alg:Knockoffs-SPR}.

\subsection{Network Learning with Knockoffs-SPR}

\begin{figure}
\begin{algorithm}[H]
\caption{\label{alg:pipeline}Training with Knockoffs-SPR.}
\begin{algorithmic}[1]
\renewcommand{\algorithmicrequire}{\textbf{Input:}}
\renewcommand{\algorithmicensure}{\textbf{Output:}}
\REQUIRE Noisy dataset $\{(\mathrm{img}_i, \bm{y}_i)\}_{i=1}^n$, probability $p$.
\ENSURE  Trained network.
\\ \textit{Initialization} : 
\STATE Self-supervised pre-trained backbone, an EMA model;
\STATE Get $\bm{x}$ from self-supervised pre-trained backbone;
\STATE \textit{clean set}: Run Algorithm~\ref{alg:Knockoffs-SPR} on $\{\bm{x}_i,\bm{y}_i\}_{i=1}^n$;
\\ \textit{Training Process}:
\FOR  {ep = 0 to max\_epochs}
\FOR{\emph{each mini-batch}}
\STATE Train with Eq.~\eqref{eq:loss-cutmix} or~\eqref{eq:loss-supervised} with chance $p$ and $1-p$;
\STATE Update features $\bm{x}$ visited in current mini-batch;
\STATE Update EMA model;
\ENDFOR
\STATE Run Algorithm~\ref{alg:Knockoffs-SPR} on $\{(\bm{x}_i, \bm{y}_i)\}_{i=1}^n$ to get \textit{clean set};
\ENDFOR
\RETURN Trained network.
\end{algorithmic}
\end{algorithm}
\end{figure}

When training with Knockoffs-SPR, we can further exploit the support of noisy data by incorporating Knockoffs-SPR with semi-supervised algorithms.
In this paper, we interpolate part of images between clean data and noisy data as in CutMix~\cite{yun2019cutmix},
\begin{subequations}
\label{eq:cutmix}
\begin{align}
\tilde{\mathrm{img}}&=\bm{M}\odot\mathrm{img}_{\mathrm{clean}}+(1-\bm{M})\odot\mathrm{img}_{\mathrm{noisy}}\\
\tilde{\bm{y}}&=\lambda \bm{y}_{\mathrm{clean}}+(1-\lambda)\bm{y}_{\mathrm{noisy}}
\end{align}
\end{subequations}
where $\bm{M}\in\{0,1\}^{W\times H}$ is a binary mask, $\odot$ is element-wise multiplication, $\lambda\sim\mathrm{Beta}(0.5,0.5)$ is the interpolation coefficient, and the clean and noisy data are identified by Knockoffs-SPR.
Then we train the network using the interpolated data using
\begin{equation}
\mathcal{L}\left(\tilde{\mathrm{img}},\tilde{\bm{y}}\right) = \mathcal{L}_{\mathrm{CE}}\left(\tilde{\mathrm{img}},\tilde{\bm{y}}\right),
\label{eq:loss-cutmix}
\end{equation} 
where $\mathcal{L}_{\mathrm{CE}}$ indicates the cross-entropy loss. Empirically, we could switch between this semi-supervised training with standard supervised training on estimated clean data.
\begin{equation}
\mathcal{L}\left(\mathrm{img}_i,\bm{y}_i\right) = \mathbbm{1}(i\in \mathcal{C})\cdot \mathcal{L}_{\mathrm{CE}}\left(\mathrm{img}_i,\bm{y}_i\right),
\label{eq:loss-supervised}
\end{equation} 
where $\mathbbm{1}(i\in \mathcal{C})$ is the indicator function, which means that only the cross-entropy loss of estimated clean data is used to calculate the loss.
We further store a model with EMA-updated weights.
Our full algorithm is illustrated in Algorithm~\ref{alg:pipeline}.
Neural networks trained with this pipeline enjoy powerful recognition capacity in several synthetic and real-world noisy datasets.
\section{\label{sec:related}Related Work}

Here we make the connections between our Knockoffs-SPR and previous research efforts.
\subsection{Learning with Noisy Labels}
The target of Learning with Noisy Labels (LNL) is to train a more robust model from the noisy dataset.
We can roughly categorize LNL algorithms into two groups: robust algorithm and noise detection.
A robust algorithm does not focus on specific noisy data but designs specific modules to ensure that networks can be well-trained even from the noisy datasets.
Methods following this direction includes constructing robust network~\cite{xiao2015learning, goldberger2017training,chen2015webly, han2018masking}, robust loss function~\cite{ghosh2017robust, zhang2018generalized, wang2019symmetric, lyu2020curriculum,10039708}  robust regularization~\cite{tanno2019learning,menon2020can, xia2021robust,zhou2021learning} against noisy labels.

The noise detection method aims to identify the noisy data and design specific strategies to deal with the noisy data, including down-weighting the importance in the loss function for the network training~\cite{thulasidasan2019combating}, re-labeling them to get correct labels~\cite{tanaka2018joint}, or regarding them as unlabeled data in the semi-supervised manner~\cite{Li2020DivideMix}, etc.

For the noise detection algorithm, noisy data are identified by some irregular patterns, including large error~\cite{shen2019learning}, gradient directions~\cite{ren2018learning}, disagreement within multiple networks~\cite{yu2019does}, inconsistency along the training path~\cite{zhou2021robust} and some spatial properties in the training data~\cite{wang2018iterative,lee2019robust,wu2020topological, dong2020extreme}.
Some algorithms~\cite{veit2017learning, ren2018learning} rely on the existence of an extra clean set to detect noisy data.

After detecting the clean data, the simplest strategy is to train the network using the clean data only or re-weight the data~\cite{patrini2017making} to eliminate the noise.
Some algorithms~\cite{Li2020DivideMix,arazo2019unsupervised} regard the detected noisy data as unlabeled data to fully exploit the distribution support of the training set in the semi-supervised learning manner.
There are also some studies of designing label-correction module~\cite{xiao2015learning,vahdat2017toward,veit2017learning,li2017learning,tanaka2018joint,yi2019probabilistic} to further pseudo-labeling the noisy data to train the network. Few of these approaches are designed from the statistical perspective with non-asymptotic guarantees, in terms of clean sample selection. In contrast, our Knockoffs-SPR can theoretically control the false-selected rate in selecting clean samples under general scenarios.

\subsection{Mean-Shift Parameter}
Mean-shift parameters or incidental parameters~\cite{neyman1948consistent} originally tackled to solve the robust estimation problem via penalized estimation~\cite{fan2010selective}
.
With a different focus on specific parameters, this formulation address wide attention in different research topics, including economics~\cite{neyman1948consistent, kiefer1956consistency, basu2011elimination, moreira2008maximum}, robust regression~\cite{she2011outlier,fan2018partial}, statistical ranking~\cite{fu2015robust}, face recognition~\cite{wright2009robust}, semi-supervised few-shot learning~\cite{wang2020instance,wang2021trust}, and Bayesian preference learning~\cite{simpson2020scalable}, to name a few.
Previous work usually uses this formulation to solve robust linear models, while in this paper we adopt this to select clean data and help the training of neural networks.
Furthermore, we design an FSR control module and a scalable sample selection algorithm based on mean-shift parameters with theoretical guarantees.

\begin{table*}
\caption{Test accuracies(\%) on several benchmark datasets with different settings.
\label{tab:main-results}}
\centering
\begin{tabular}{llccccccc}
\toprule 
\multirow{2}{*}{Dataset} & \multirow{2}{*}{Method} &  \multicolumn{4}{c}{Sym. Noise Rate} &  \multicolumn{3}{c}{Asy. Noise Rate}\tabularnewline
& & 0.2 & 0.4 & 0.6 & 0.8 & 0.2 & 0.3 & 0.4 \tabularnewline
\midrule
\multirow{17}{*}{CIFAR-10}
& Standard & $85.7\pm 0.5$ & $81.8\pm0.6$ & $73.7\pm1.1$ & $42.0\pm2.8$ & $88.0\pm0.3$ & $86.4\pm0.4$ & $84.9\pm0.7$ \tabularnewline
& Forgetting & $86.0\pm0.8$ & $82.1\pm0.7$ & $75.5\pm0.7$ & $41.3\pm3.3$ & $89.5\pm0.2$ & $88.2\pm0.1$ & $85.0\pm1.0$ \tabularnewline
& Bootstrap & $86.4\pm0.6$ & $82.5\pm0.1$ & $75.2\pm0.8$ & $42.1\pm3.3$ & $88.8\pm0.5$ & $87.5\pm0.5$ & $85.1\pm0.3$ \tabularnewline
& Forward & $85.7\pm0.4$ & $81.0\pm0.4$ & $73.3\pm1.1$ & $31.6\pm4.0$ & $88.5\pm0.4$ & $87.3\pm0.2$ & $85.3\pm0.6$ \tabularnewline
& Decoupling & $87.4\pm0.3$ & $83.3\pm0.4$ & $73.8\pm1.0$ & $36.0\pm3.2$ & $89.3\pm0.3$ & $88.1\pm0.4$ & $85.1\pm1.0$ \tabularnewline
& MentorNet & $88.1\pm0.3$ & $81.4\pm0.5$ & $70.4\pm1.1$ & $31.3\pm2.9$ & $86.3\pm0.4$ & $84.8\pm0.3$ & $78.7\pm0.4$ \tabularnewline
& Co-teaching & $89.2\pm0.3$ & $86.4\pm0.4$ & $79.0\pm0.2$ & $22.9\pm3.5$ & $90.0\pm0.2$ & $88.2\pm0.1$ & $78.4\pm0.7$ \tabularnewline
& Co-teaching+ & $89.8\pm0.2$ & $86.1\pm0.2$ & $74.0\pm0.2$ & $17.9\pm1.1$ & $89.4\pm0.2$ & $87.1\pm0.5$ & $71.3\pm0.8$ \tabularnewline
& IterNLD & $87.9\pm0.4$ & $83.7\pm0.4$ & $74.1\pm0.5$ & $38.0\pm1.9$ & $89.3\pm0.3$ & $88.8\pm0.5$ & $85.0\pm0.4$ \tabularnewline
& RoG & $89.2\pm0.3$ & $83.5\pm0.4$ & $77.9\pm0.6$ & $29.1\pm1.8$ & $89.6\pm0.4$ & $88.4\pm0.5$ & $86.2\pm0.6$ \tabularnewline
& PENCIL & $88.2\pm0.2$ & $86.6\pm0.3$ & $74.3\pm0.6$ & $45.3\pm1.4$ & $90.2\pm0.2$ & $88.3\pm0.2$ & $84.5\pm0.5$ \tabularnewline
& GCE & $88.7\pm0.3$ & $84.7\pm0.4$ & $76.1\pm0.3$ & $41.7\pm1.0$ & $88.1\pm0.3$ & $86.0\pm0.4$ & $81.4\pm0.6$ \tabularnewline 
& SL & $89.2\pm0.5$ & $85.3\pm0.7$ & $78.0\pm0.3$ & $44.4\pm1.1$ & $88.7\pm0.3$ & $86.3\pm0.1$ & $81.4\pm0.7$ \tabularnewline
& TopoFilter & $90.2\pm0.2$ & $87.2\pm0.4$ & $80.5\pm0.4$ & $45.7\pm1.0$ & $90.5\pm0.2$ & $89.7\pm0.3$ & $87.9\pm0.2$ \tabularnewline
\cline{2-9}\noalign{\vspace{0.5ex}}
& SPR & $92.0\pm0.1$ & $\bm{94.6\pm0.2}$ & $91.6\pm0.2$ & $80.5\pm0.6$ & $89.0\pm0.8$ & $90.3\pm0.8$ & $91.0\pm0.6$ \tabularnewline
& Knockoffs-SPR & $\bm{95.4\pm0.1}$ & $94.5\pm0.1$ & $\bm{93.3\pm0.1}$ & $\bm{84.6\pm0.8}$ & $\bm{95.1\pm0.1}$ & $\bm{94.5\pm0.2}$ & $\bm{93.6\pm0.2}$\tabularnewline
\midrule
\multirow{15}{*}{CIFAR-100}
& Standard & $56.5\pm0.7$ & $50.4\pm0.8$ & $38.7\pm1.0$ & $18.4\pm0.5$ & $57.3\pm0.7$ & $52.2\pm0.4$ & $42.3\pm0.7$ \tabularnewline
& Forgetting & $56.5\pm0.7$ & $50.6\pm0.9$ & $38.7\pm1.0$ & $18.4\pm0.4$ & $57.5\pm1.1$ & $52.4\pm0.8$ & $42.4\pm0.8$ \tabularnewline
& Bootstrap & $56.2\pm0.5$ & $50.8\pm0.6$ & $37.7\pm0.8$ & $19.0\pm0.6$ & $57.1\pm0.9$ & $53.0\pm0.9$ & $43.0\pm1.0$ \tabularnewline
& Forward & $56.4\pm0.4$ & $49.7\pm1.3$ & $38.0\pm1.5$ & $12.8\pm1.3$ & $56.8\pm1.0$ & $52.7\pm0.5$ & $42.0\pm1.0$ \tabularnewline
& Decoupling & $57.8\pm0.4$ & $49.9\pm1.0$ & $37.8\pm0.7$ & $17.0\pm0.7$ & $60.2\pm0.9$ & $54.9\pm0.1$ & $47.2\pm0.9$ \tabularnewline
& MentorNet & $62.9\pm1.2$ & $52.8\pm0.7$  & $36.0\pm1.5$ & $15.1\pm0.9$ & $62.3\pm1.3$ & $55.3\pm0.5$ & $44.4\pm1.6$ \tabularnewline
& Co-teaching & $64.8\pm0.2$ & $60.3\pm0.4$ & $46.8\pm0.7$ & $13.3\pm2.8$ & $63.6\pm0.4$ & $58.3\pm1.1$ & $48.9\pm0.8$\tabularnewline
& Co-teaching+ & $64.2\pm0.4$ & $53.1\pm0.2$ & $25.3\pm0.5$ & $10.1\pm1.2$ & $60.9\pm0.3$ & $56.8\pm0.5$ & $48.6\pm0.4$ \tabularnewline
& IterNLD & $57.9\pm0.4$ & $51.2\pm0.4$ & $38.1\pm0.9$ & $15.5\pm0.8$ & $58.1\pm0.4$ & $53.0\pm0.3$ & $43.5\pm0.8$ \tabularnewline
& RoG & $63.1\pm0.3$ & $58.2\pm0.5$ & $47.4\pm0.8$ & $20.0\pm0.9$ & $67.1\pm0.6$ & $65.6\pm0.4$ & $58.8\pm0.1$ \tabularnewline
& PENCIL & $64.9\pm0.3$ & $61.3\pm0.4$ & $46.6\pm0.7$ & $17.3\pm0.8$ & $67.5\pm0.5$ & $66.0\pm0.4$ & $61.9\pm0.4$ \tabularnewline
& GCE & $63.6\pm0.6$  & $59.8\pm0.5$ & $46.5\pm1.3$ & $17.0\pm1.1$ & $64.8\pm0.9$ & $61.4\pm1.1$ & $50.4\pm0.9$ \tabularnewline 
& SL & $62.1\pm0.4$  & $55.6\pm0.6$  & $42.7\pm0.8$  & $19.5\pm0.7$ & $59.2\pm0.6$ & $55.1\pm0.7$ & $44.8\pm0.1$\tabularnewline
& TopoFilter & $65.6\pm0.3$ & $62.0\pm0.6$ & $47.7\pm0.5$  & $20.7\pm1.2$ & $68.0\pm0.3$ & $66.7\pm0.6$ & $62.4\pm0.2$ \tabularnewline
\cline{2-9}\noalign{\vspace{0.5ex}}
& SPR & $72.5\pm0.2$ & $\bm{75.0\pm0.1}$ & $\bm{70.9\pm0.3}$ & $\bm{38.1\pm0.8}$ & $71.9\pm0.2$ & $72.4\pm0.3$ & $70.9\pm0.5$
\tabularnewline
& Knockoffs-SPR & $\bm{77.5\pm0.2}$ & $74.3\pm0.2$ & $67.8\pm0.4$ & $30.5\pm1.0$ & $\bm{77.3\pm0.4}$ & $\bm{76.3\pm0.3}$ & $\bm{73.9\pm0.6}$
\tabularnewline
\bottomrule
\end{tabular}
\end{table*}

\subsection{Knockoffs}
Knockoffs was first proposed in \cite{barber2015controlling} as a data-adaptive method to control FDR of variable selection in the sparse regression problem. This method was then extended to high-dimension regression \cite{barber2019knockoff, candes2018panning}, multi-task regression \cite{dai2016knockoff}, outlier detection \cite{xu2016false} and structural sparsity \cite{cao2021control}. The core of Knockoffs is to construct a fake copy of $\bm{X}$ as negative controls of original features, in order to select true positive features with FDR control. Our Knockoffs-SPR is inspired by but different from the classical knockoffs in the following aspects:  
\textbf{i)} Model Assumption:
The original knockoffs method investigates the model of $\bm{y}=\bm{x}^{\top}\bm{\beta}+\bm{\varepsilon}$, while Knockoffs-SPR study the model $\bm{y}=\bm{x}^{\top}\bm{\beta}+\bm{\gamma}+\bm{\varepsilon}$.
\textbf{ii)} Target:
The primary goal of the original knockoffs method is to select a subset of $\bm{\beta}$ or columns of $\bm{x}$ for feature selection, identifying patterns that genuinely impact the response variables. In contrast, Knockoffs-SPR focuses on sample selection, aiming to choose a subset of rows from the $\gamma$ matrix among all available training data.
\textbf{iii)} Focused Problem:
The original knockoff method aims to control false-positive selections of non-zero $\bm{\beta}$, akin to controlling type I errors. 
Knockoffs-SPR, on the other hand, targets the control of false-selections of zero $\bm{\gamma}$, analogous to controlling type II errors.
\textbf{iv)} Construction of Filters:
In the original knockoffs method, a knockoff copy $\tilde{\bm{X}}$ of the design matrix $\bm{X}$ is constructed to replicate its correlation structure. In the case of Knockoffs-SPR, a ``knockoff'' copy of the label matrix $\bm{Y}$ is created by permuting the indices of the $1$s in each row.
Equipped with a calibrated data-partitioning strategy, our method can control the FER under any desired level.

\section{\label{sec:experiments}Experiments}
\textbf{Datasets.}
We validate the effectiveness of Knockoffs-SPR on synthetic noisy datasets CIFAR-10 and CIFAR-100~\cite{krizhevsky2009learning}, and real-world noisy datasets  WebVision~\cite{li2017webvision} and Clothing1M~\cite{xiao2015learning}.
We consider two types of noisy labels for CIFAR:
i) Symmetric noise:
Every class is corrupted uniformly with all other labels;
ii) Asymmetric noise:
Labels are corrupted by similar (in pattern) classes. 
WebVision has 2.4 million images collected from the internet with the same category list as ImageNet ILSVRC12.
Clothing1M has 1 million images collected from the internet and labeled by the surrounding texts.

\textbf{Backbones}. 
For CIFAR, we use ResNet-18~\cite{he2016deep} as our backbone. 
For WebVision we use Inception-ResNet~\cite{szegedy2017inception} to extract features to follow previous works. For Clothing1M we use ResNet-50 as backbone. 
For CIFAR and WebVision, we respectively self-supervised pretrain for 100 epochs and 350 epochs using SimSiam~\cite{chen2021exploring}. For Clothing1M, we use ImageNet pre-trained weights to follow previous works.

\textbf{Hyperparameter setting}.
We use SGD to train all the networks with a momentum of 0.9 and a cosine learning rate decay strategy.
The initial learning rate is set as 0.01.
The weight decay is set as 1e-4 for Clothing1M, and 5e-4 for other datasets.
We use a batch size of 128 for all experiments.
We use random crop and random horizontal flip as augmentation strategies.
The network is trained for 180 epochs for CIFAR, 300 epochs for WebVision, and 5 epochs for Clothing1M.
Network training strategy is selected with $p=0.5$ (line 6 in Algorithm~\ref{alg:pipeline}) for Clothing1M, while for other datasets we only use CutMix training.
For features used in Knockoffs-SPR, we reduce the dimension of $\bm{X}$ to the number of classes. For Clothing1M, this is 14 and for other datasets the reduced dimension is 10 (each piece of CIFAR-100 and WebVision contains 10 classes).
We also run SPR with our new network training algorithm (Algorithm~\ref{alg:pipeline}) and report the corresponding results.

\subsection{Evaluation on Synthetic Label Noise}
\textbf{Competitors}. 
We use cross-entropy loss (Standard) as the baseline algorithm for two datasets.
We compare Knockoffs-SPR with algorithms that include
Forgetting~\cite{arpit2017closer} with train the network using dropout strategy, 
Bootstrap~\cite{reed2015training} which trains with bootstrapping, 
Forward Correction~\cite{patrini2017making} which corrects the loss function to get a robust model,
Decoupling~\cite{malach2017decoupling} which uses a meta-update strategy to decouple the update time and update method, 
MentorNet~\cite{jiang2018mentornet} which uses a teacher network to help train the network, 
Co-teaching~\cite{han2018co} which uses two networks to teach each other, 
Co-teaching+~\cite{yu2019does} which further uses an update by disagreement strategy to improve Co-teaching, 
IterNLD~\cite{wang2018iterative} which uses an iterative update strategy, 
RoG~\cite{lee2019robust} which uses generated classifiers, 
PENCIL~\cite{yi2019probabilistic} which uses a probabilistic noise correction strategy, 
GCE~\cite{zhang2018generalized} and SL~\cite{wang2019symmetric} which are extensions of the standard cross-entropy loss function, 
and TopoFilter~\cite{wu2020topological} which uses feature representation to detect noisy data.
For each dataset, all the experiments are run with the same backbone to make a fair comparison.
We run all the experiments with  randomly generated noisy labels for five times and calculate the average and standard deviation of the accuracy of the last epoch.
The results of competitors are reported in~\cite{wu2020topological}.

As in Table~\ref{tab:main-results},
Knockoffs-SPR enjoys a higher performance compared with other competitors on CIFAR, validating the effectiveness of Knockoffs-SPR on different noise scenarios.
SPR enjoys better performance on higher symmetric noise rate of CIFAR-100.
This may contributes to the manual selection threshold of 50\% of the data.
Then SPR will select more data than Knockoffs-SPR, for example in Sym. 80\% noise scenario SPR will select 24816 clean data while Knockoffs-SPR will select 18185.
This leads to a better recovery of clean data (recall of 94.22\% while Knockoffs-SPR is 81.20\%) and thus a better recognition capacity.

\subsection{Evaluation on Real-World Noisy Datasets} 

\begin{table}
\caption{Test accuracies(\%) on WebVision and ILSVRC12.}
\label{tab:webvision}
\centering
\begin{tabular}{lcccc}
\toprule
\multirow{2}{*}{Method}&
\multicolumn{2}{c}{WebVision} &
\multicolumn{2}{c}{WebVision $\rightarrow$ ILSVRC12} \tabularnewline
& top1 & top5 & top1 & top5 \tabularnewline
\midrule
F-correction & 61.12 & 82.68 & 57.36 & 82.36 \tabularnewline
Decoupling & 62.54 & 84.74 & 58.26 & 82.26 \tabularnewline
D2L & 62.68 & 84.00 & 57.80 & 81.36 \tabularnewline
MentorNet & 63.00 & 81.40 & 57.80 & 79.92 \tabularnewline
Co-teaching & 63.58 & 85.20 & 61.48 & 84.70 \tabularnewline
Iterative-CV & 65.24 & 85.34 & 61.60 & 84.98 \tabularnewline
DivideMix & 77.32 & 91.64 & \textbf{75.20} & 90.84 \tabularnewline
\midrule
SPR & 77.08 & 91.40 & 72.32 & 90.92\tabularnewline
Knockoffs-SPR & \textbf{77.96} & \textbf{92.28} & 74.72 & \textbf{92.88}\tabularnewline
\bottomrule
\end{tabular}
\end{table}
\begin{table}
\caption{Test accuracies(\%) on Clothing1M.}
\label{tab:clothing1m}
\centering
\begin{tabular}{lc}
\toprule
Method & Accuracy \tabularnewline
\midrule
Cross-Entropy & 69.21 \tabularnewline
F-correction & 69.84 \tabularnewline
M-correction & 71.00 \tabularnewline
Joint-Optim & 72.16 \tabularnewline
Meta-Cleaner & 72.50 \tabularnewline
Meta-Learning & 73.47 \tabularnewline
P-correction & 73.49 \tabularnewline
TopoFiler & 74.10 \tabularnewline
DivideMix & 74.76 \tabularnewline
\midrule
SPR & 71.16 \tabularnewline
Knockoffs-SPR & \textbf{75.20}\tabularnewline
\bottomrule
\end{tabular}
\end{table}
In this part, we compare Knockoffs-SPR with other methods on real-world noisy datasets: WebVision and Clothing1M. 
We follow previous works to train and test on the first 50 classes of WebVision.
We also evaluate models trained on WebVision to ILSVRC12 to test the cross-dataset accuracy.

\textbf{Competitors}. 
For WebVision, we compare with CE that trains with cross-entropy loss (CE), as well as Decoupling~\cite{malach2017decoupling}, D2L~\cite{ma2018dimensionality}, MentorNet~\cite{jiang2018mentornet}, Co-teaching~\cite{han2018co}, Iterative-CV~\cite{chen2019understanding}, and DivideMix~\cite{Li2020DivideMix}.
For clothing1M, we compare with F-correction~\cite{patrini2017making}, M-correction~\cite{arazo2019unsupervised}, Joint-Optim~\cite{tanaka2018joint}, Meta-Cleaner~\cite{zhang2019metacleaner}, Meta-Learning~\cite{li2019learning}, P-correction~\cite{yi2019probabilistic}, TopoFilter~\cite{wu2020topological} and DivideMix~\cite{Li2020DivideMix}.

The results of real-world datasets are shown in Table~\ref{tab:webvision} and Table~\ref{tab:clothing1m}, where the results of competitors are reported in~\cite{Li2020DivideMix}.
Our algorithm Knockoffs-SPR enjoys superior performance to almost all the competitors, showing the ability of handling real-world challenges.
Compared with SPR, Knockoffs-SPR also achieves better performance, indicating the beneficial of FSR control in real-world problems of learning with noisy labels.

\begin{figure}
\centering
\includegraphics[width=1\linewidth]{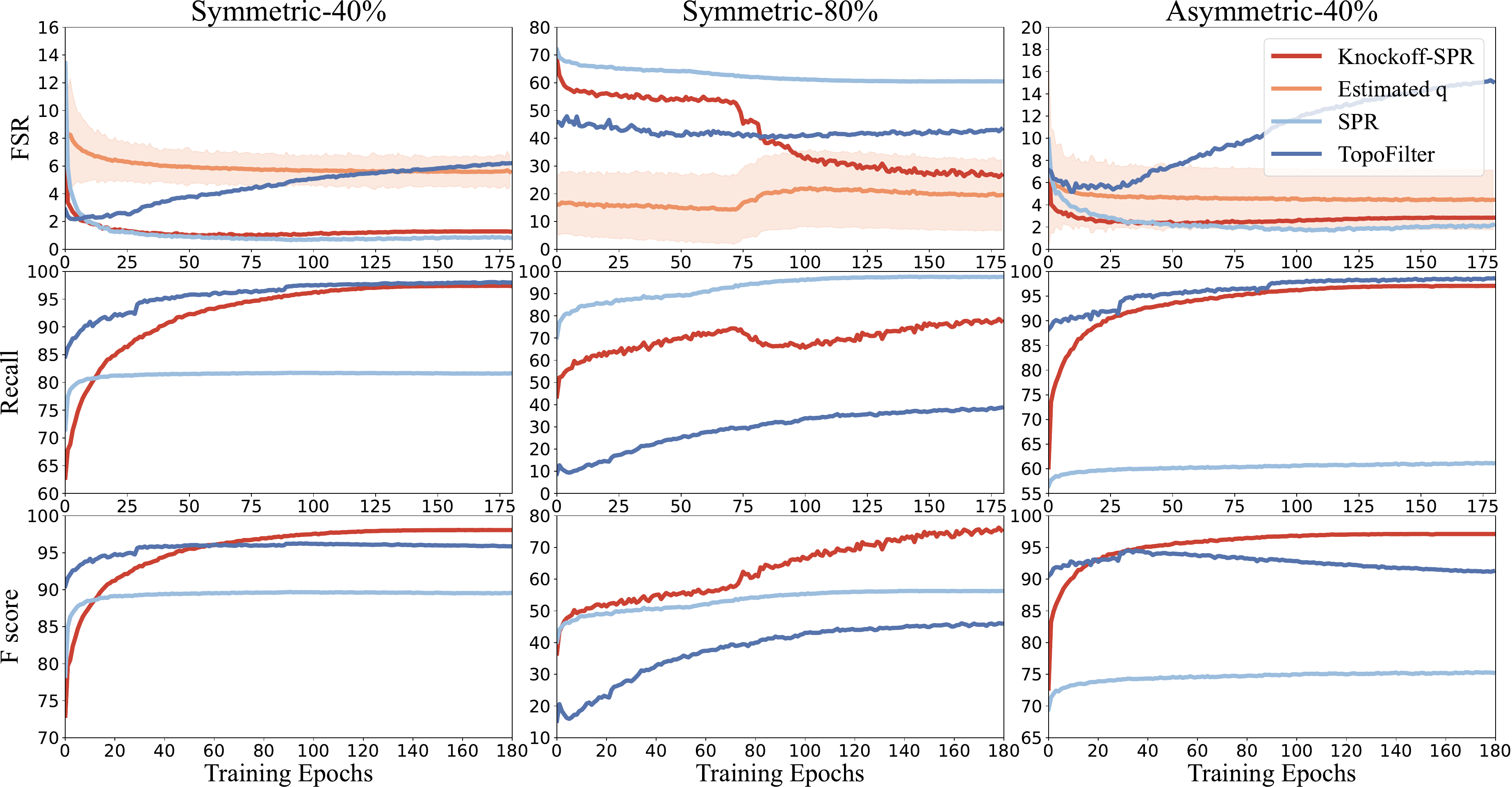}
\caption{Performance(\%) comparison on sample selection along the training path on CIFAR 10 with different noise scenarios.
In the FSR, we also visualize the estimated FSR ($q$) by Knockoffs-SPR, which is the threshold we use to select clean data.
}
\label{fig:FSR-q}
\end{figure}

\subsection{\label{sec:selection-quality}Evaluation of Sample Selection Quality}
To test whether Knockoffs-SPR leads to better sample selection quality, we test the following statistics on CIFAR-10 with different noise scenarios, including Sym. 40\%, Sym. 80\%, and Asy. 40\%.
\textbf{i)} \emph{FSR}: the ratio of falsely selected noisy data in the estimated clean data, which is the target that Knockoffs-SPR aims to control;
\textbf{ii)} \emph{Recall}: the ratio of selected ground-truth clean data in the full ground-truth clean data, which indicates the power of sample selection algorithms;
\textbf{iii)} \emph{F1-score}: the harmonic mean of precision (1-FSR) and recall, which measures the balanced performance of FSR control and power.
We plot the corresponding statistics of each algorithm along the training epochs in Fig.~\ref{fig:FSR-q}.
We further visualize the estimated FSR, $q$, of Knockoffs-SPR to compare with the ground-truth FSR. As we use the splitting algorithm, where each piece contains 10 classes with each class containing a subset of data, we estimate FSR for each piece and report their average and standard deviation.

\textbf{FSR control in practice}.
\textbf{i)} When the noise rate is not high, for example in Sym. 40\% and Asy. 40\% scenarios, the ground-truth FSR is well upper-bounded by the estimated FSR (with no larger than a single standard deviation).
When the noise rate is high, for example in Sym. 80\% noise scenario, the FSR cannot get controlled in the early stage.
However, as the training goes on, FSR can be well-bounded by Knockoffs-SPR, indicating the cycle between
sample selection and network training evolves properly.

\textbf{ii)} When the training set is not very noisy, for example in Sym. 40\% scenario, the true FSR is far below the estimated $q$. This gap can be explained by a good estimation of $\bm{\beta}$ due to the small noisy rate. When $\hat{\bm{\beta}}_1$ can accurately estimate $\bm{\beta}^*$, the $\tilde{\bm{\gamma}}^{*}_{2,j}$ dominate in Eq.~\eqref{eq:kkt}. Therefore, the $P(W_j>0|\tilde{\bm{y}}_{2,j} \text{ is clean}) > \frac{1}{2}$, making $P(W_j > 0) > 1/2> \frac{c-2}{2(c-1)}$. Since the true FSR bound is inversely proportional to $P(W_j > 0)$ (FSR $\propto \max_{j \in \mathcal{C}^c} 1/P(W_j>0)-1$), it is smaller than the theoretical bound $q$. 

\textbf{Sample selection quality comparison.}
We compare the sample selection quality of Knockoffs-SPR with SPR and TopoFilter~\cite{wu2020topological}.
\textbf{i)} Knockoffs-SPR enjoys the (almost) best FSR control capacity in all noise scenarios, especially in the high noise rate setting.
Other algorithms can suffer from failure in controlling the FSR (for example in Sym. 80\% scenario).
\textbf{ii)} The power of Knockoffs-SPR is comparable to the best algorithms in Sym. 40\% and Asy. 40\% scenarios.
For the Sym. 80\% case, Knockoffs-SPR sacrifices some power for FSR control.
\textbf{iii)} Knockoffs-SPR enjoys the best F1 score on sample selection quality, which well-establishes its superiority in selecting  clean data with FSR control.

Note that the SPR algorithm enjoys remarkable theoretical properties, allowing us to identify all noisy data under certain conditions. However, a major drawback lies in our inability to access in advance whether these conditions will hold, as well as our inability to determine an appropriate selection ratio due to the unknown noise rate of the training dataset.
In practical applications, we typically set the selection ratio of SPR to 50\%. 
When the actual noise rate is close to this ratio, we observe excellent selection performance and effective FSR control, suggesting that the theoretical conditions are met in most cases.
Conversely, when the noise rate significantly deviates from the selection ratio, SPR may result in a high FSR. 
In such scenarios, our newly proposed approach, Knockoffs-SPR, demonstrates a significant improvement in FSR control, as validated in the Sym.-80\% setting illustrated in Fig.~\ref{fig:FSR-q}.

\subsection{\label{sec:ablation}Further Analysis}

\begin{table}
\caption{Ablation(\%) of Knockoffs-SPR on CIFAR-10.}
\label{tab:pp-ablation}
\centering
\begin{tabular}{llccc}
\toprule
Setting & Method & Acc. & FSR & q \tabularnewline
\midrule
\midrule
\multirow{6}{*}{Sym. 40\%} & 
SPR & 94.0 & 0.82 & -\tabularnewline
& $*$-random & 92.0 & 23.04 & 4.31\ci{0.73} \tabularnewline
& $*$-multi & 94.4 & 1.31 & 2.00\ci{0.00} \tabularnewline
& $*$-noPCA  & 81.7 & 11.51 & 14.18\ci{7.62} \tabularnewline
& $*$-NN & 94.3 & 1.28 &5.62\ci{1.07}\tabularnewline
& Knockoffs-SPR  & 94.7 & 1.27 & 5.59\ci{1.11}\tabularnewline
\midrule
\multirow{6}{*}{Sym. 80\%} & 
SPR & 78.0 & 60.47 & -  
\tabularnewline
& $*$-random & 84.6 & 49.76 & 9.47\ci{4.39} \tabularnewline
& $*$-multi & 83.0 & 25.77 & 2.22\ci{0.62} \tabularnewline
& $*$-noPCA & 10.0  & 78.06 & 25.95\ci{11.88} \tabularnewline
& $*$-NN & 79.3 & 29.70 & 19.67\ci{12.79}\tabularnewline
& Knockoffs-SPR & 84.3 & 26.72 & 19.52\ci{12.77} \tabularnewline
\midrule
\multirow{6}{*}{Asy. 40\%} 
& SPR  & 89.5 & 2.19 & -
\tabularnewline
& $*$-random & 84.4 & 16.94 & 4.15\ci{2.59} \tabularnewline
& $*$-multi & 93.4  & 2.97 & 2.00\ci{0.00}\tabularnewline
& $*$-noPCA  & 93.7 & 7.62 & 5.22\ci{2.85}\tabularnewline
& $*$-NN & 93.8 & 2.77 &4.44\ci{2.67}\tabularnewline
& Knockoffs-SPR & 93.5 & 2.84 & 4.45\ci{2.68}\tabularnewline
\bottomrule
\end{tabular}
\end{table}

\textbf{Influence of Knockoffs-SPR strategies}.
We compare Knockoffs-SPR with several variants, including:
SPR (The original SPR algorithm),
$*$-random (Knockoffs-SPR with randomly permuted labels),
$*$-multi (Knockoffs-SPR without class-specific selection),
$*$-noPCA (Knockoffs-SPR without using PCA to pre-process the features) and $*$-NN (Knockoffs-SPR that directly use an additional linear layer to reduce the feature dimension).
Experiments are conducted on CIFAR-10 with different noise scenarios, as in Table~\ref{tab:pp-ablation}. 
We observe the following results: 

\textbf{i)} As also shown in Fig.~\ref{fig:FSR-q}, the SPR can control the FSR in Sym. 40\% and Asy. 40\% but fails in Sym. 80\%. This may be due to that when the noisy pattern is not significant, the collinearity is weak between noisy samples and clean ones, as shown by the distribution of irrepresentable value $\{\Vert (X_\mathcal{S}^\top X_\mathcal{S})^{-1}X_\mathcal{S}^\top X_j \Vert_1 \}_{j \in \mathcal{S}^c}$ in Fig.~1 in the appendix. In this regard, most of the earlier (resp. later) selected samples in the solution path tend to be noisy (resp. clean) samples. When there is strong multi-collinearity and the irrepresentable condition is violated seriously, our proposed Knockoff procedure can help to control the FSR. 
The higher accuracy of Knockoffs-SPR over SPR can be explained by consistent improvements in terms of the F1 score of sample selection capacity, as shown in Fig.~\ref{fig:FSR-q}.

\textbf{ii)} Compared with the random permutation strategy, Knockoffs-SPR with most-confident permutation enjoys much better FSR control and works much better in Sym. 40\% and Asy. 40\% noise scenarios.
In Sym. 80\% noise scenario, the accuracy is comparable, but the most-confident permutation still enjoys much better FSR control. This result empirically demonstrates the advantage of the most-confident permutation over the random permutation.

\textbf{iii)} Running Knockoffs-SPR on each class separately is beneficial for the FSR control capacity and recognition capacity.
When the noise rate is high, for example in Sym. 80\% noise scenario, running Knockoffs-SPR on multiple classes cannot control the FSR properly by $q$.

\textbf{iv)} Using PCA to pre-process features is beneficial for FSR control in all cases and will increase the recognition capacity in some cases, especially when the noise rate is high.

\textbf{v)} 
We can introduce an additional linear layer to reduce feature dimensions within the network.
This approach yields results similar to employing PCA when dealing with moderate noise scenarios. However, it leads to a degradation of both FSR control and recognition capabilities in high noise scenario.
This observation highlights the robustness of PCA as a dimension reduction technique for Knockoffs-SPR. Additionally, it positions Knockoffs-SPR as an easily integrable sample selection module for various learning frameworks with no modification.

\begin{table}
\centering
\caption{Computation efficiency  of the splitting algorithm on CIFAR-10.\label{tab:running-time}}
\begin{tabular}{lc}
\toprule
Model & Training Time for one epoch\tabularnewline
\midrule
\midrule
Knockoffs-SPR w/o split algorithm & about 6h\tabularnewline
Knockoffs-SPR w/ split algorithm & 66s\tabularnewline
\bottomrule
\end{tabular}
\end{table}
\textbf{Influence of \underline{\textit{Scalable}}}.
In our framework, we propose a split algorithm to divide the whole training set into small pieces to run Knockoffs-SPR in parallel.
In this part, we compare the running time between using the split algorithm and not using it.
Results are shown in Table~\ref{tab:running-time}. We can see that the splitting algorithm can significantly reduce the computation time. This is important in large-scale applications.

While it is impractical to apply Knockoffs-SPR to the entire  training data, we conducted an analysis of its capacity on pieces. 

\textbf{i)} The break of majority-clean assumption:
Due to our sampling strategy, we may not guarantee that clean data  will always constitute the majority within each piece. 
However, this deviation does not undermine the integrity of our sample selection process.
(1) As we employ uniformly data sampling for splitting, the pieces violating the majority-clean assumption are only minor fraction. 
Even if our algorithm encounters challenges in these pieces, it effects only a minority of the training data. 
Consequently, the FSR remains under control when considering the expected performance across pieces.
For an illustrative example, refer to the average and standard deviation of $q$ in Fig.~\ref{fig:FSR-q}.
Moreover, since we employ random  the data shuffling during each training epoch, the falsely selected data have limited influence.
(2) As the training progresses, the linear relation between features and ground-truth labels strengthens, evidenced by improved testing  accuracy over time.
Consequently, this linear relation can persist even when clean data is not the majority within a specific piece, which can mitigate the impact of challenging cases.

\textbf{ii)} The FSR control capacity: We assess two key factors the influence the FSR control capacity, the number of classes within each group, and the number of samples within each class.
Regarding the number of classes, note that we have $\mathbb{P}(W_j >0 \mid \gamma_j^*\neq0) = \frac{1}{2} \frac{c-2}{c-1} + \kappa_j \frac{1}{c-1}$. Since $\kappa_j$ can normally be larger than $1/2$ as the noisy label tends to be less credible than the clean label, a smaller $c$ tends to make $W_j < 0$ with less probability. As a result, for any $t$, the number of set $\{j:0<W_j\leq t\}$ tends to be large, while $\{j:-t\leq W_j<0\}$ tends to be small, leads to zero $T$ and thus an empty clean set if $q$ is too small. That means we need a much larger value of $q$ to make the clean set non-empty. Such a larger value of $q$ may result in a higher FSR. 
Additionally, a greater number of samples per class improves the estimation of $\hat{\bm{\beta}}$, benefiting the FSR control capacity of  Knockoffs-SPR.
In general, a higher number of classes and more examples within each class enhance the performance.
To validate these principles, we conduct experiments on CIFAR-10 with different noise scenarios.
To ensure comparability, we use  the same self-supervised pre-trained backbone across all experiments, differing only in the number of classes within each group and number of samples within each piece. 
Results are shown in Fig.~\ref{fig:FSR-split}, clearly demonstrating that greater number of classes and more examples within each piece result in lower FSR.
In practice, we typically set the number of classes within each group to 10 and the number of samples within each class to 75, guided by hardware constraints. These hyperparameters can be increased if more computational resources or more efficient algorithms become available, which we leave as a potential avenue for future research.

\begin{figure}
\centering
\includegraphics[width=1\linewidth]{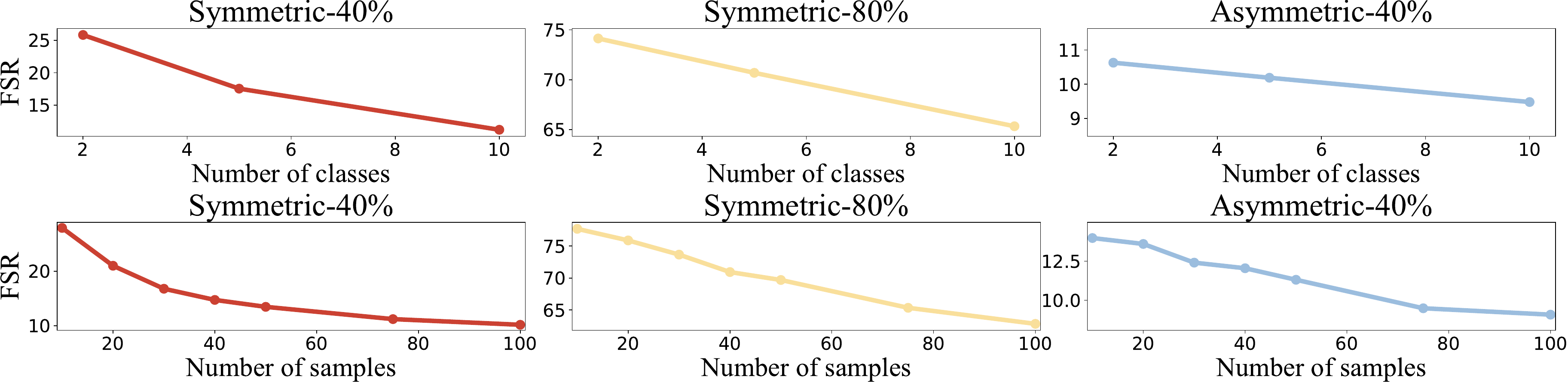}
\caption{FSR(\%) comparison of different number of classes and number of samples in Knockoffs-SPR on CIFAR 10 with different noise scenarios.
}
\label{fig:FSR-split}
\end{figure}

\begin{table}
\caption{Ablation(\%) of training strategies on CIFAR-10.}
\label{tab:network-ablation}
\centering
\begin{tabular}{lccc}
\toprule
Method & Sym. 40\% & Sym. 80\% & Asy. 40\% \tabularnewline
\midrule
Knockoffs-SPR - Self & 92.5 & 24.3 & 92.2\tabularnewline
Knockoffs-SPR - Semi & 91.3 & 54.0 & 88.5\tabularnewline
Knockoffs-SPR - EMA & 94.5 & 83.8 & 93.2\tabularnewline
Knockoffs-SPR & 94.7 & 84.3 & 93.5 \tabularnewline
\bottomrule
\end{tabular}
\end{table}
\textbf{Influence of network training strategies}.
To better train the network, we adopt the self-supervised pre-trained backbone and the semi-supervised learning framework with an EMA update model.
In this part, we test the influence of these strategies on CIFAR-10 with different noise scenarios. Concretely, we compare the full framework with \emph{Knockoffs-SPR - Self} which uses a randomly initialized backbone, \emph{Knockoffs-SPR - Semi} which uses supervised training, and \emph{Knockoffs-SPR - EMA} which does not use the EMA update model. Results are summarized in table.~\ref{tab:network-ablation}. 
We can find that: 
\textbf{i)} The self-supervised pre-training is important for high noise rate scenarios, while for other settings, it is not so essential;
\textbf{ii)} Semi-supervised training consistently improves the recognition capacity, indicating the utility of leveraging the support of noisy data;
\textbf{iii)} The EMA model will slightly improve the recognition capacity.

\begin{figure}
\centering
\includegraphics[width=1\linewidth]{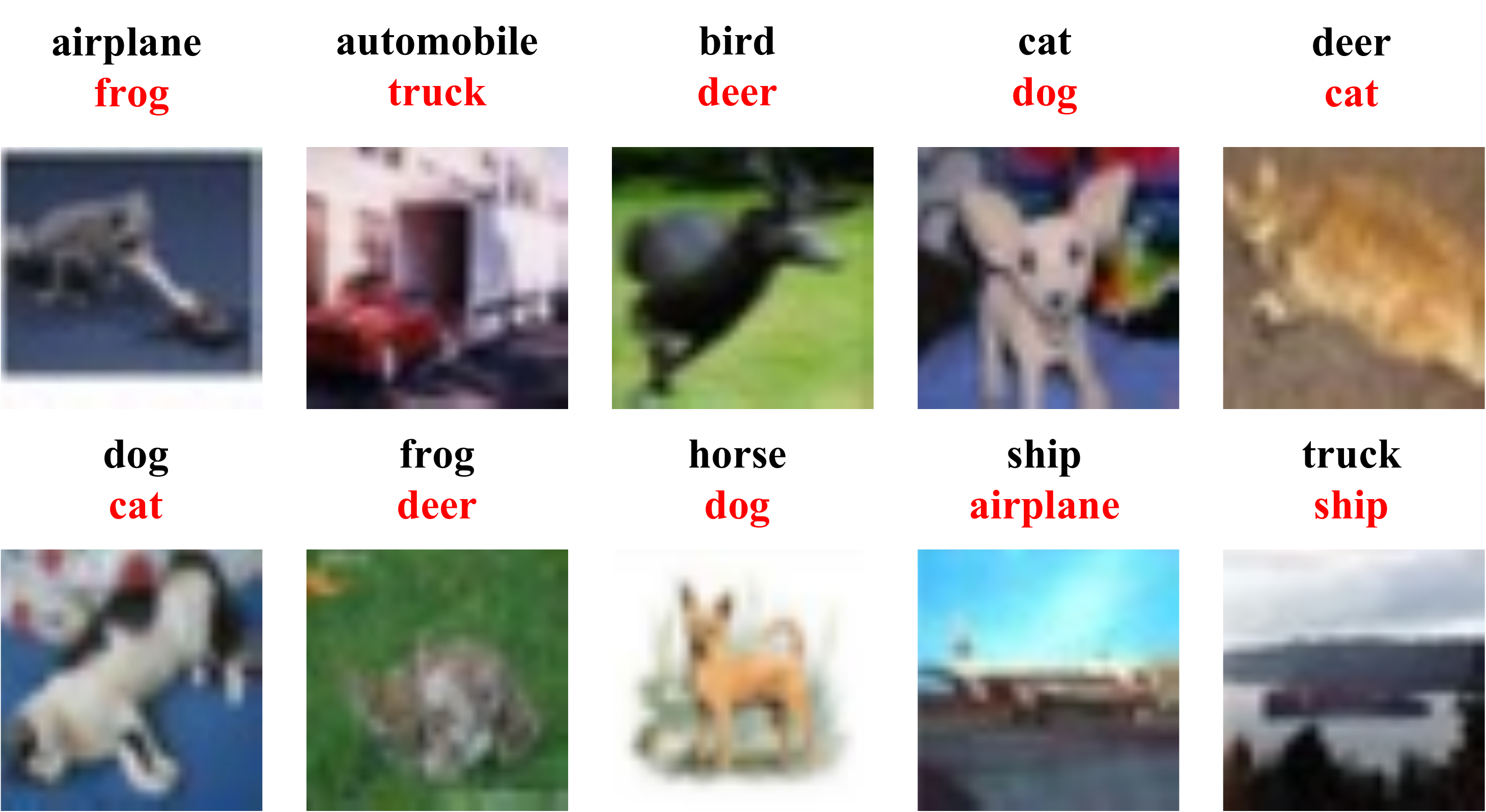}
\caption{Qualitative results of falsely selected examples by Knockoffs-SPR.
The black words are the labeled classes while the real classes are denoted by red words.
}
\label{fig:example}
\end{figure}
\textbf{Qualitative visualization}.
We randomly visualize some falsely selected examples of CIFAR-10 in Fig.~\ref{fig:example}. Most of these cases have some patterns that confuse the noisy label and the true label, thus making Knockoffs-SPR falsely identify them as clean samples.

\section{Conclusion}
\label{sec:conclusion}

This paper proposes a statistical sample selection framework -- Scalable Penalized Regression with Knockoff Filters (Knockoffs-SPR) to select clean data with a controlled false selection rate.
Specifically, we propose an equivalent leave-one-out $t$-test approach as a penalized linear model, in which  zero mean-shift parameters can be induced as an indicator for clean data.
We propose a delicate Knockoffs-SPR algorithm to identify clean samples in a way that the false selection rate is controlled by the user-provided upper bound.
Such an upper bound is proved theoretically and works well in empirical results.
Experiments on several synthetic and  real-world datasets demonstrate the effectiveness of Knockoff-SPR.

\noindent\textbf{
Acknowledgements.
}
This work was supported in part by National Key Research and Development Program of China (No. 2022YFC2405100), the National Natural Science Foundation of China Grant (62076067), and the State Key Program of National Natural Science Foundation of China under Grant No. 12331009.

\bibliographystyle{IEEEtran}
\bibliography{spr.bib}

\begin{IEEEbiography}[{\includegraphics[width=1in,height=1.25in,clip,keepaspectratio]{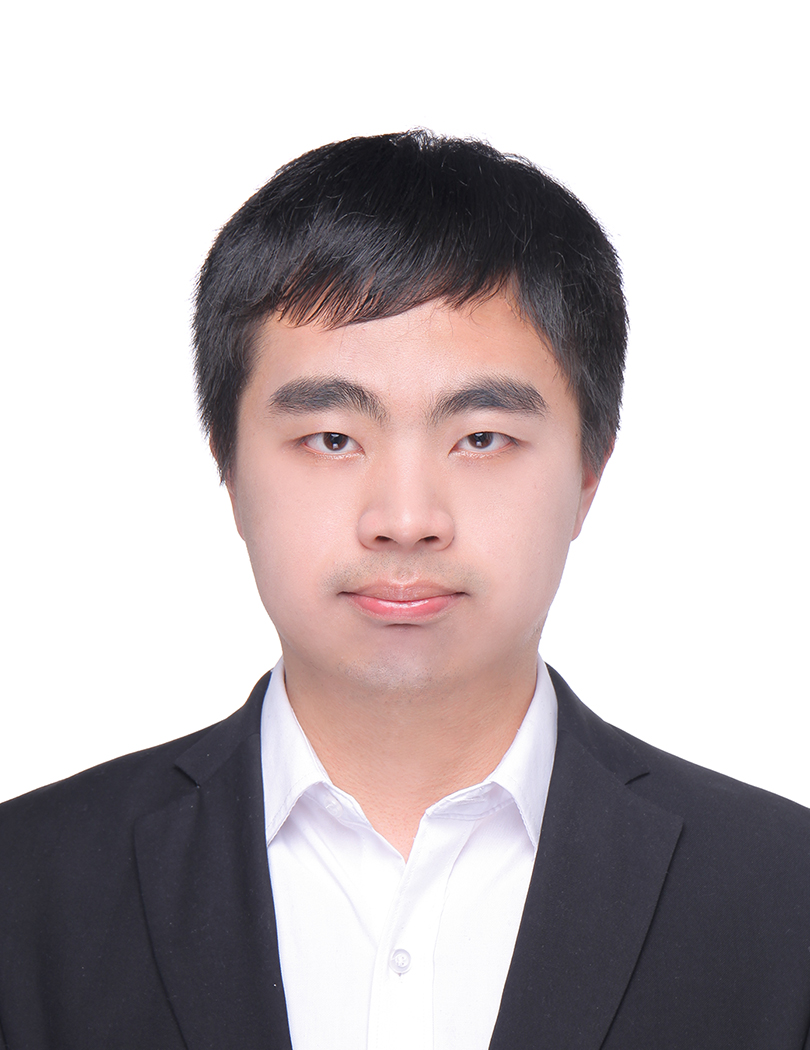}}]{Yikai Wang}
is a PhD candidate at the School of Data Science, Fudan University, under the supervision of Prof. Yanwei Fu. 
He received a Bachelor's degree in mathematics from the School of Mathematical Sciences, Fudan University, in 2019.
He published 1 IEEE TPAMI paper, 2 CVPR and 1 ICCV papers.
His current research interests include statistical machine learning and foundation models, with a focus on sparsity for sample selection.
\end{IEEEbiography}

\begin{IEEEbiography}[{\includegraphics[width=1in,height=1.25in,clip,keepaspectratio]{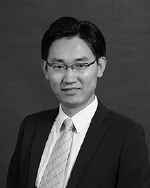}}]{Yanwei Fu} received the MEng degree
from the Department of Computer Science and
Technology, Nanjing University, China, in 2011, and
the PhD degree from the Queen Mary University of
London, in 2014. He held a post-doctoral position
with Disney Research, Pittsburgh, PA, from 2015 to
2016. He is currently a  professor with Fudan
University. He was appointed as the professor of
Special Appointment (Eastern Scholar) with Shanghai
Institutions of Higher Learning. His work has led
to many awards, including the IEEE ICME 2019 best
paper. He published more than 100 journal/conference papers including IEEE
Transactions on Pattern Analysis and Machine Intelligence, IEEE Transactions
on Multimedia, ECCV, and CVPR. His research interests are one-shot learning,
and learning-based 3D reconstruction
\end{IEEEbiography}

\begin{IEEEbiography}[{\includegraphics[width=1in,height=1.25in,clip,keepaspectratio]{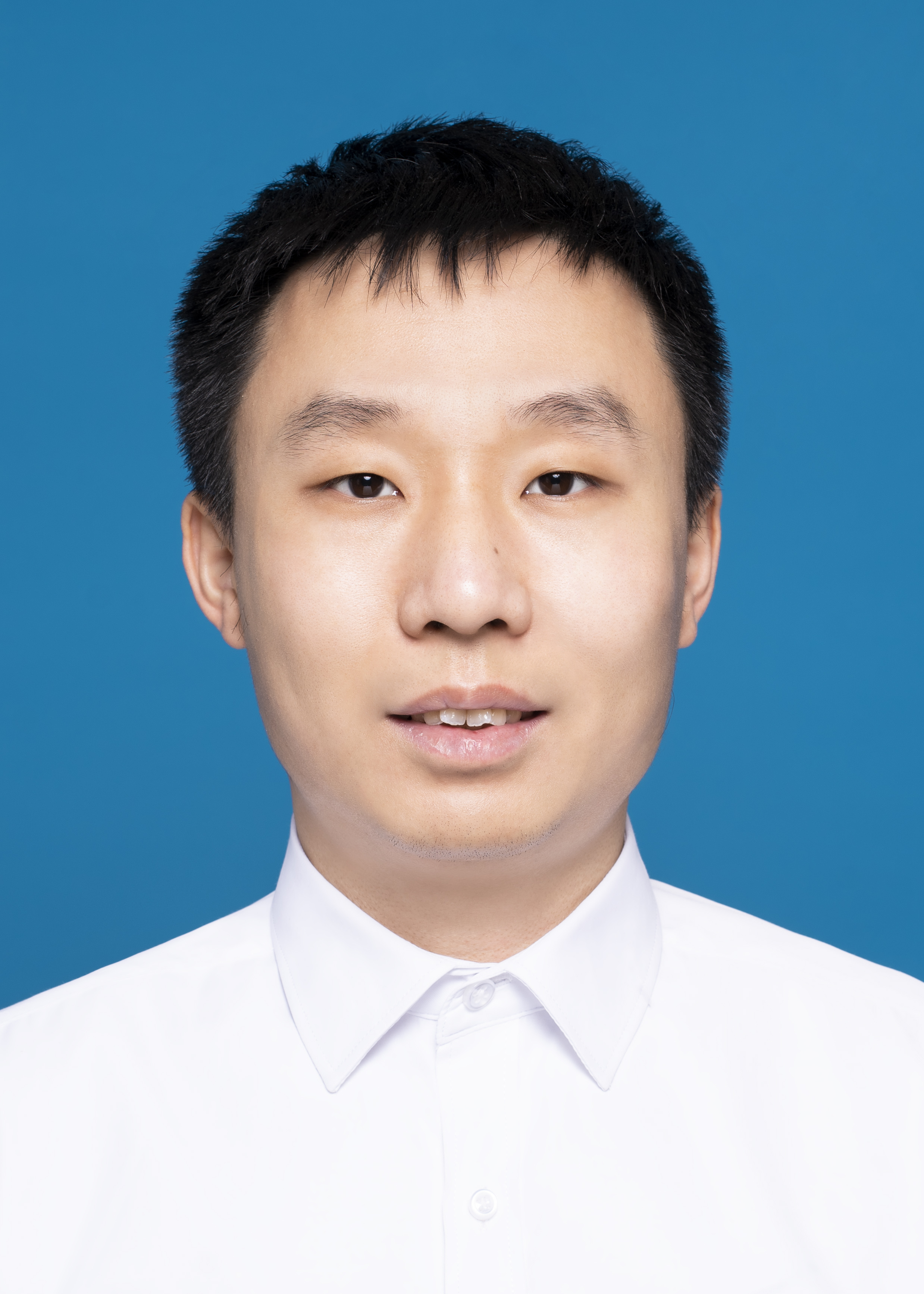}}]{Xinwei Sun}
is currently an assistant professor at the School of Data Science, Fudan University. He received his Ph.D. in the school of mathematical sciences, at Peking University in 2018. His research interests mainly focus on high-dimensional statistics and causal inference, with their applications in machine learning and medical imaging.
\end{IEEEbiography}

In this supplementary material, we formally present the proof of FSR control theorem of knockoff-SPR in Sec.~\ref{sec:FSR}.
For consistency, we also provide the proof of the noisy set recovery theorem of SPR in Sec.~\ref{sec:spr}.
Some additional experimental results are provided in Sec.~\ref{sec:visualization}.

\section{\label{sec:FSR}FSR Control theorem of knockoff-SPR}
Recall that we are solving the problem of
\begin{equation}
\label{eq:spr-permute}
\begin{cases}
\frac{1}{2}\left\Vert \bm{Y}_2-\bm{X}_2\tilde{\bm{\beta}}_1-\bm{\gamma}_2\right\Vert _{\mathrm{F}}^{2}+\sum_j P(\bm{\gamma}_{2,j};\lambda),\\
\frac{1}{2}\left\Vert \tilde{\bm{Y}}_2-\bm{X}_2\tilde{\bm{\beta}}_1-\tilde{\bm{\gamma}}_2\right\Vert _{\mathrm{F}}^{2}+\sum_j P(\tilde{\bm{\gamma}}_{2,j};\lambda),
\end{cases}
\end{equation}
where $\bm{Y}_2,\tilde{\bm{Y}}_2,\bm{\gamma}_2,\tilde{\bm{\gamma}}_2\in\mathbb{R}^{n\times c}$, $\bm{X}_2 \in\mathbb{R}^{n\times p},\hat{\bm{\beta}}_1\in\mathbb{R}^{p\times c}$.
We introduce the following lemma from~\cite{barber2019knockoff} and~\cite{cao2021control}.
\begin{lem}
\label{lem:barber}
Suppose that $B_1,\ldots,B_n$ are indenpendent variables, with $B_i\sim\mathrm{Bernoulli}(\rho_i)$ for each $i$, where $\min_i \rho_i\geq\rho>0$. Let $J$ be a stopping time in reverse time with respect to the filtration $\{\mathcal{F}_j\}$, where
\[
\mathcal{F}_j=\sigma\left(\{B_1+\cdots+B_j,B_{j+1},\ldots,B_n\}\right).
\]
Then
\[ 
\mathbb{E}\left[\frac{1+J}{1+B_1+\cdots+B_J}\right]\leq\rho^{-1}.
\]
\end{lem}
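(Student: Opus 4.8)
The plan is to read the statement as a claim about an optionally stopped supermartingale, and to combine an optional-stopping argument with an elementary binomial moment identity. Write $S_j=B_1+\cdots+B_j$ and set $M_j:=\frac{1+j}{1+S_j}$. Note that $\mathcal F_n\subseteq\mathcal F_{n-1}\subseteq\cdots\subseteq\mathcal F_0$ (knowing $S_{j-1}$ and $B_j$ determines $S_j$), so after the reindexing $t\mapsto n-t$ the family $(\mathcal F_{n-t})_t$ is an ordinary filtration and $J$ is an ordinary, bounded stopping time for it. First I would establish that $(M_j)$ is a reverse-time supermartingale, i.e.\ $\mathbb E[M_{j-1}\mid\mathcal F_j]\le M_j$. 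Passing from $\mathcal F_j$ to $\mathcal F_{j-1}$ reveals exactly the single variable $B_j$ (hence $S_{j-1}=S_j-B_j$), and since $B_1,\dots,B_n$ are independent, the law of $B_j$ given $\mathcal F_j$ is its law given $S_j$. A one-line computation gives $\mathbb E[M_{j-1}\mid\mathcal F_j]-M_j=\frac{1}{1+S_j}\bigl(\frac{j\,\mathbb P(B_j=1\mid S_j)}{S_j}-1\bigr)$, so the supermartingale property reduces to the pointwise inequality $\mathbb P(B_j=1\mid S_j)\le S_j/j$.

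That inequality is the \textbf{main obstacle}. If all $\rho_i$ are equal, the conditional vector $(B_1,\dots,B_j)\mid S_j$ is exchangeable, so $\mathbb P(B_j=1\mid S_j)=S_j/j$ exactly and $(M_j)$ is in fact a martingale. For unequal $\rho_i$, the conditional law of $(B_1,\dots,B_j)$ given $S_j$ is a weighted-without-replacement sample with weights $w_i=\rho_i/(1-\rho_i)$; a short swapping argument shows $\mathbb P(B_a=1\mid S_j)-\mathbb P(B_b=1\mid S_j)=(w_a/w_b-1)\,\mathbb P(B_a=0,B_b=1\mid S_j)$, so the conditional inclusion probabilities are ordered like the weights, and combined with $\sum_{i\le j}\mathbb P(B_i=1\mid S_j)=S_j$ this yields $\mathbb P(B_j=1\mid S_j)\le S_j/j$ whenever $w_j=\min_{i\le j}w_i$. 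Thus the clean version of the argument goes through when the $B_i$ are indexed in decreasing order of their success probabilities (which is how the lemma is used); absorbing the fully unordered case --- by conditioning additionally on the total $S_n$ and comparing the inclusion probabilities step by step --- is the one delicate point I would have to argue carefully.

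Granting the supermartingale property, I would invoke optional stopping for the bounded stopping time $J$: $\mathbb E\bigl[\frac{1+J}{1+S_J}\bigr]=\mathbb E[M_J]\le\mathbb E[M_n]=(1+n)\,\mathbb E\bigl[\frac{1}{1+S_n}\bigr]$. It then remains to control $\mathbb E[1/(1+S_n)]$. Since each $B_i$ stochastically dominates a $\mathrm{Bernoulli}(\rho)$ and $x\mapsto 1/(1+x)$ is decreasing, $\mathbb E[1/(1+S_n)]\le\mathbb E[1/(1+X)]$ with $X\sim\mathrm{Binom}(n,\rho)$; using the identity $\frac{1}{k+1}\binom nk=\frac{1}{n+1}\binom{n+1}{k+1}$ this evaluates to $\frac{1-(1-\rho)^{n+1}}{(n+1)\rho}\le\frac{1}{(n+1)\rho}$. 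Putting the pieces together, $\mathbb E\bigl[\frac{1+J}{1+S_J}\bigr]\le(1+n)\cdot\frac{1}{(n+1)\rho}=\rho^{-1}$, which is the claim.
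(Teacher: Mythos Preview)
Your treatment of the i.i.d.\ case is correct and coincides with the first half of the paper's proof: $(M_j)$ is a reverse-time (super)martingale by exchangeability, optional stopping gives $\mathbb E[M_J]\le\mathbb E[M_n]$, and the binomial identity yields $\mathbb E[M_n]\le\rho^{-1}$.

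The gap is exactly where you flagged it: for non-identical $\rho_i$, the supermartingale inequality $\mathbb P(B_j=1\mid S_j)\le S_j/j$ is \emph{false} in general. Take $j=2$, $\rho_1=0.1$, $\rho_2=0.9$; then $\mathbb P(B_2=1\mid S_2=1)=0.81/0.82\approx 0.988>1/2$. Your ordering argument salvages the inequality only when $\rho_j=\min_{i\le j}\rho_i$, but the lemma does not assume any such ordering, and in the paper's application the indices are ordered by the magnitudes $|W_j|$, not by the success probabilities, so you cannot reindex without destroying the stopping-time property of $J$. The suggestion of ``conditioning on $S_n$ and comparing step by step'' does not repair this: conditioning on $S_n$ still leaves the conditional inclusion probabilities at each step monotone in the $\rho_i$, so the same counterexample persists.

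The paper avoids the supermartingale property altogether in the heterogeneous case and instead uses a \emph{coupling} (from Cao, Sun, Yao 2021). One constructs, on an enlarged probability space, i.i.d.\ $Q_i\sim\mathrm{Bernoulli}(\rho)$ and a random set $A$ (independent of the $Q_i$) such that $B_i=Q_i\mathbf 1\{i\in A\}+\mathbf 1\{i\notin A\}$. Since subtracting the common term $|\{i\le j:i\notin A\}|$ from both numerator and denominator only increases the ratio, one has
\[
\frac{1+j}{1+S_j}\;\le\;\frac{1+|\{i\le j:i\in A\}|}{1+\sum_{i\le j,\,i\in A}Q_i},
\]
and the right-hand side, conditioned on $A$, involves only i.i.d.\ $\mathrm{Bernoulli}(\rho)$ variables, so the already-proved i.i.d.\ case applies and taking expectation over $A$ finishes the proof. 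This coupling is the missing idea; your stochastic-domination bound on $\mathbb E[1/(1+S_n)]$ is correct but only addresses the terminal value $M_n$, not the intermediate supermartingale steps.
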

\begin{proof}
We first follow~\cite{barber2019knockoff} to prove the case when $\{B_i\}$ are i.i.d. variables with $B_i\sim\mathrm{Bernoulli}(\rho)$, where $\rho>0$.
Then we follow~\cite{cao2021control} to generalize the conclusion to non-identical case.

Define the stochastic process
\begin{equation}
M_j \coloneqq \frac{1+j}{1+S_j}\quad\textrm{with}\quad S_j\coloneqq B_1+\cdots+B_j
\end{equation}
We show that $\{M_j\}$ is a super-martingale with respect to the reverse filtration $\{\mathcal{F}_j\}$.
It is trivial that $\{M_j\}$ is $\{\mathcal{F}_j\}$-adapted and $\{\mathcal{F}_j\}$ is reverse filtration, that is a decreasing sequence 
\begin{equation}
\mathcal{F}_j\subset\mathcal{F}_{j-1}\cdots\subset \{B_i\}_{i=1}^n
\end{equation}
with each $\mathcal{F}_j$ be a sub-$\sigma$-algebras of $\sigma(\{B_i\}_{i=1}^n)$.
Further, we have
$
\mathbb{E}\left[\left|M_j\right|\right]\leq 1+j\leq 1+n <\infty
$
with fixed $n$.
Now we bound the conditional expectation $\mathbb{E}[M_j\mid\mathcal{F}_{j+1}]$.
Note that since $\{B_j\}_{j=1}^{i+1}$ are i.i.d. variable and thus exchangeable when conditioned on $\mathcal{F}_{j+1}$, then we have
\begin{equation}
\mathbb{P}(B_{j+1}=1\mid\mathcal{F}_{j+1})=\frac{S_{j+1}}{j+1}
\end{equation}
When $S_{j+1}=0$, it is natural that $S_{j}=0$ thus $M_j=1+j < 1+ (j+1)=M_{j+1}$.
When $S_{j+}>0$, we have 
\begin{equation}
\begin{aligned}
\mathbb{E}[M_j\mid\mathcal{F}_{j+1}]
=&\frac{1+j}{1+S_{j+1}-1}\cdot\mathbb{P}(B_{j+1}=1\mid\mathcal{F}_{j+1})\\
&+\frac{1+j}{1+S_{j+1}}\cdot\mathbb{P}(B_{j+1}=0\mid\mathcal{F}_{j+1})\\
=&\frac{1+j}{S_{j+1}}\cdot\frac{S_{j+1}}{j+1}+\frac{1+j}{1+S_{j+1}}\cdot\frac{j+1-S_{j+1}}{j+1}\\
=&\frac{1+(j+1)}{1+S_{j+1}}\\
=&M_{j+1}.
\end{aligned}
\end{equation}
Hence we have $\mathbb{E}[M_j\mid\mathcal{F}_{j+1}]\leq M_{j+1}$, which finishes the proof for the super-martingale $\{M_j\}$.
Then by the Doob's optional sampling theorem~\cite{doob1953stochastic}, we have
\begin{equation}
\mathbb{E}[M_j]\leq\mathbb{E}[M_n].
\end{equation}
Finally, we have
\begin{equation}
\begin{aligned}
\mathbb{E}[M_n]&=\mathbb{E}[\frac{1+n}{1+S_n}]\\
&=(1+n)\sum_{m=0}^n\frac{1}{1+m}\cdot\frac{n!}{m!(n-m)!}\rho^m (1-\rho)^{n-m}\\
&=\rho^{-1}(1 - (1-\rho)^{n+1})\\
&\leq \rho^{-1}.
\end{aligned}
\end{equation}
Now it suffices to show that the conclusion also holds for non-identical Bernoulli variables.
Following~\cite{cao2021control}, for each $B_i\sim\mathrm{Bernoulli}(\rho_i)$, we construct the following disjoint Borel sets
$\{A_{j}^{i}\}_{j=1}^4$ such that $\mathbb{R}=\cup_{j=1}^4A_j$ with
\begin{equation}
\begin{aligned}
&\mathbb{P}(A_1^i)=1-\rho_i;\quad
\mathbb{P}(A_2^i)=\rho\frac{1-\rho_i}{1-\rho};\\
&\mathbb{P}(A_3^i)=\rho\frac{\rho_i-\rho}{1-\rho};\quad
\mathbb{P}(A_4^i)=\rho_i-\rho.
\end{aligned}
\end{equation}
Define $U_i=A_1^i\cup A_2^i,V_i=A_2^i\cup A_3^i,G_i=A_2^i\cup A_3^i\cup A_4^i$.
Based on the specific construction we can set $G_i=B_i$.
Further define $Q_i=1\{\xi_i\in V_i\}$ and a random set $A=\{i:\xi_i\in U_i\}$.
Then we have
\begin{equation}
\begin{aligned}
&Q_i\cdot 1\{i\in A\} + 1\{i\notin A\}\\
&=1\{\{\xi_i\in V_i \cap U_i\}\cup\{\xi_i\in U_i^C\}\}\\
&=1\{\{\xi_i\in A_2^i\}\cup\{\xi_i\in A_3^i\cup A_4^i\}\}\\
&=1\{\{\xi_i\in A_2^i\cup A_3^i\cup A_4^i\}\}=B_i.
\end{aligned}
\end{equation}
Hence 
\begin{equation}
\begin{aligned}
\frac{1+j}{1+S_j}&=\frac{1+|i\leq j:i\in A| + |i\leq j:i\notin A|}{1+\sum_{i\leq j,i\in A}Q_i+|i\leq j:i\notin A|}\\
&\leq\frac{1+|i\leq j:i\in A|}{1+\sum_{i\leq j,i\in A}Q_i}.
\end{aligned}
\end{equation}
The inequality holds because $\frac{a+c}{b+c}\leq\frac{a}{b}$ for $0<b\leq a,c\geq 0$.
Note that by definition
\begin{equation}
\begin{aligned}
\mathbb{P}(Q_i=1\mid i\in A)&=\mathbb{P}(\xi_i\in V_i\mid \xi_i \in U_i)\\
&=\frac{\mathbb{P}(A_2^i )}{\mathbb{P}(A_1^i \cup A_2^i)}\\
&=\rho=\mathbb{P}(Q_i=1),\\
\mathbb{P}(Q_i=1\mid i\not\in A)&=\mathbb{P}(\xi_i\in V_i\mid \xi_i \notin U_i)\\
&=\frac{\mathbb{P}(A_3^i )}{\mathbb{P}(A_3^i \cup A_4^i)}\\
&=\rho=\mathbb{P}(Q_i=1).
\end{aligned}
\end{equation}
indicating that $Q_i$ and $A$ are independent.

For any fixed $A$, define $\tilde{Q}_i=Q_i\cdot 1\{i\in A\}$ and the reverse filtration
$\tilde{F}_j=\sigma(\{\sum_{k=1}^j\tilde{Q}_k, \tilde{Q}_{j+1},\ldots,\tilde{Q}_n,A\})$.
Then when conditioned on $A$, the established result suggests that 
\begin{equation}
\mathbb{E}\left[\frac{1+|i\leq j:i\in A|}{1+\sum_{i\leq j,i\in A}Q_i}\bigg| A\right]\leq \rho^{-1}.
\end{equation}
Take expectation over $A$ finishes the proof.
\end{proof}
\subsection{Proof of Theorem 2}
\begin{proof}
We first control the FSR rate of the second subset.
Specifically, we have
\begin{equation}
\begin{aligned}
\mathrm{FSR}(T) &\leq\mathbb{E}\left[\frac{\#\left\{ j:\bm{\gamma}_{j}\neq0\textrm{ and }-T\leq W_j<0\right\}}{1+\#\left\{ j:\bm{\gamma}_{j}\neq0\textrm{ and }0< W_j\leq T\right\}}\right.  \\
& \left. \cdot \frac{1+\#\left\{ j:0< W_j\leq T\right\}}{\#\left\{ j:-T\leq W_j<0\right\} \lor1}\right] \\
& \leq q\cdot \mathbb{E}\left[\frac{\#\left\{ j:\bm{\gamma}_{j}\neq0\textrm{ and }-T\leq W_j<0\right\}}{1+\#\left\{ j:\bm{\gamma}_{j}\neq0\textrm{ and }0< W_j\leq T\right\}} \right]. 
\end{aligned}
\end{equation}
The second inequality holds by the definition of $T$.
Now it suffices to 
bound
\begin{equation}
\mathbb{E}\left[\frac{\#\left\{ j:\bm{\gamma}_{j}\neq0\textrm{ and }-T\leq W_j<0\right\}}{1+\#\left\{ j:\bm{\gamma}_{j}\neq0\textrm{ and }0< W_j\leq T\right\}} \right].
\end{equation}
Given $\bm{\gamma}_{j}\neq0$, consider the following decomposition,
\begin{equation}
\begin{split}
\mathbb{P}(W_j > 0) 
=& \mathbb{P}(W_j>0 | \tilde{\bm{\gamma}}_j^*\neq0) \mathbb{P}(\tilde{\bm{\gamma}}_j^*\neq0) \\
&+ \mathbb{P}(W_j>0 | \tilde{\bm{\gamma}}_j^*=0 ) \mathbb{P}(\tilde{\bm{\gamma}}_j^*=0)
\end{split}
\end{equation}
For $\tilde{\bm{\gamma}}^*_j$, we have a probability of $\frac{1}{c-1}$ to get a clean $\tilde{\bm{\gamma}}_j^*$, 
and a probability of $\frac{c-2}{c-1}$ to get a noisy $\tilde{\bm{\gamma}}_j^*$. 
For $\tilde{\bm{\gamma}}_j^*\neq 0$, we have no information and hence assume a equal probability of $Z_{j}>Z_{j+n}$ and $Z_{j}<Z_{j+n}$.
Then it suffices to bound $\mathbb{P}(W_j>0 | \tilde{\bm{\gamma}}_j^*=0 ) $.
Consider the KKT condition of our problem~\eqref{eq:spr-permute}, and particularly for $\gamma_j^*\neq 0$ and $\tilde{\gamma}_j^*=0$, we have 
\begin{subequations}
\label{eq:kkt}
\begin{align}
\bm{\gamma}_{j}+\frac{\partial P(\bm{\gamma}_{j};\lambda)}{\partial\bm{\gamma}_{j}}=\bm{x}_{j}^{\top}(\bm{\beta}^*-\hat{\bm{\beta}}_1)+\bm{\gamma}_{j}^{*}+\bm{\varepsilon}_{j},\label{eq:kkt-1} \\
\tilde{\bm{\gamma}}_{j}+\frac{\partial P(\tilde{\bm{\gamma}}_{j};\lambda)}{\partial\tilde{\bm{\gamma}}_{j}}=\bm{x}_{j}^{\top}(\bm{\beta}^{*}-\hat{\bm{\beta}}_1)+\tilde{\bm{\gamma}}^{*}_{j}+\tilde{\bm{\varepsilon}}_{j}\label{eq:kkt-2},
\end{align}
\end{subequations}
And
\begin{equation}
\begin{split}
W_j > 0 \iff &\Vert\gamma_j\Vert > \Vert\tilde{\gamma}_j\Vert \\
 \iff &\Vert \bm{x}_{j}^{\top}(\bm{\beta}^*-\hat{\bm{\beta}}_1)+\bm{\gamma}_{j}^{*}+\bm{\varepsilon}_{j}\Vert \\&> \Vert \bm{x}_{j}^{\top}(\bm{\beta}^*-\hat{\bm{\beta}}_1)+\tilde{\bm{\gamma}}^{*}_{j}+\tilde{\bm{\varepsilon}}_{j} \Vert
\end{split}
\end{equation}
Given that $\tilde{\gamma}_j^*=0$ and denote $\bm{a}_j=\bm{x}_{j}^{\top}(\bm{\beta}^*-\hat{\bm{\beta}}_1)$.
Then 
\begin{equation}
\begin{aligned}
&\mathbb{P}(W_j>0 | \tilde{\bm{\gamma}}_j^*=0 )=
\mathbb{P}(\Vert \bm{a}_j +\bm{\gamma}_{j}^{*}+\bm{\varepsilon}_{j}\Vert > \Vert \bm{a}_j +\tilde{\bm{\varepsilon}}_{j}\Vert)\\
&\geq \mathbb{P}(\Vert \bm{a}_j +\bm{\gamma}_{j}^{*}+\bm{\varepsilon}_{j}\Vert > \Vert \bm{a}_j +\tilde{\bm{\varepsilon}}_{j}\Vert \mid \Vert\bm{\varepsilon}_{j}\Vert > M, \Vert\tilde{\bm{\varepsilon}}_{j}\Vert < m)\\
&\cdot \mathbb{P}(\Vert\bm{\varepsilon}_{j}\Vert > M, \Vert\tilde{\bm{\varepsilon}}_{j}\Vert < m)
\end{aligned}
\end{equation}
For Gaussian random variables $\bm{\varepsilon}_{j},\tilde{\bm{\varepsilon}}_{j}$, there exist a sufficiently large $M>0$ and a sufficiently small $m>0$, such that the inequality holds in non-zero probability, and the second probability is also non-zero.
Then we have $\mathbb{P}(W_j>0 | \tilde{\bm{\gamma}}_j^*=0 ) >0$.
\\
On the other hand, we have
\begin{equation}
\begin{aligned}
&1 - \mathbb{P}(W_j>0 | \tilde{\bm{\gamma}}_j^*=0 )=
\mathbb{P}(\Vert \bm{a}_j +\bm{\gamma}_{j}^{*}+\bm{\varepsilon}_{j}\Vert \leq \Vert \bm{a}_j +\tilde{\bm{\varepsilon}}_{j}\Vert)\\
&\geq \mathbb{P}(\Vert \bm{a}_j +\bm{\gamma}_{j}^{*}+\bm{\varepsilon}_{j}\Vert \leq \Vert \bm{a}_j +\tilde{\bm{\varepsilon}}_{j}\Vert \mid \Vert\bm{\varepsilon}_{j}\Vert < m, \Vert\tilde{\bm{\varepsilon}}_{j}\Vert > M)\\
&\cdot \mathbb{P}(\Vert\bm{\varepsilon}_{j}\Vert < m, \Vert\tilde{\bm{\varepsilon}}_{j}\Vert > M).
\end{aligned}
\end{equation}
Similarly, there exist a sufficiently large $M>0$ and a sufficiently small $m>0$ to ensure the non-zero probability on the right, such that this probability is also non-zero.
Denote $\kappa_j\coloneqq \mathbb{P}(W_j>0 | \tilde{\bm{\gamma}}_j^*=0 )$, we have $\kappa_j\in (0,1)$, and thus,
\begin{equation}
\begin{split}
\mathbb{P}(W_j > 0) 
=& \mathbb{P}(W_j>0 | \tilde{\bm{\gamma}}_j^*\neq0) \mathbb{P}(\tilde{\bm{\gamma}}_j^*\neq0) \\&+ \mathbb{P}(W_j>0 | \tilde{\bm{\gamma}}_j^*=0 ) \mathbb{P}(\tilde{\bm{\gamma}}_j^*=0) \\
\geq& \frac{1}{2} \times \frac{c-2}{c-1} + \kappa_j \times \frac{1}{c-1} 
= \frac{c-2+2\kappa_j}{2(c-1)}.
\end{split}
\end{equation}
Hence the random variable $B_j\coloneqq1_{\{W_j>0\}}\sim\mathrm{Bernoulli}(\rho_j)$ for $\bm{\gamma}_{j}\neq0$ with $\rho_j\geq(c-2+2\kappa_{\min})/(2(c-1))$ where $\kappa_{\min}=\min\{\kappa_j\}_{j=1}^n$. 

Now we consider all the $W_j$ of non-null variables, and assumes $|W_1|\leq\cdots\leq|W_n|$ with the abuse of subscripts. 
We have 
\[
\gamma_{j}\neq0\textrm{ and }-T\leq W_j<0 \quad\iff\quad  j\leq J\textrm{ and } B_j = 0
\]
and
\[
\gamma_{j}\neq0\textrm{ and }0< W_j\leq T\quad\iff\quad  j\leq J\textrm{ and } B_j = 1
\]
Hence
\begin{equation}
\begin{aligned}
&\frac{\#\left\{ j:\gamma_{j}\neq0\textrm{ and }-T\leq W_j<0\right\}}{1+\#\left\{ j:\gamma_{j}\neq0\textrm{ and }0< W_j\leq T\right\}}\\
&=\frac{(1-B_1)+\cdots+(1-B_J)}{1+B_1+\cdots+B_J}\\
&=\frac{1+J}{1+B_1+\cdots+B_J}-1.
\end{aligned}
\end{equation}
If we can use Lemma~\ref{lem:barber}, then
\begin{equation}
\begin{split}
&\mathbb{E}\left[\frac{\#\left\{ j:\gamma_{j}\neq0\textrm{ and }-T\leq W_j<0\right\}}{1+\#\left\{ j:\gamma_{j}\neq0\textrm{ and }0< W_j\leq T\right\}}\right]\\
&\leq \rho^{-1}-1
\leq \frac{c-2\kappa_{\min}}{c-2+2\kappa_{\min}}.
\end{split}
\end{equation}
Then we finally get 
\begin{equation}
\mathrm{FSR}(T)\leq q \frac{c-2\kappa_{\min}}{c-2+2\kappa_{\min}}.
\end{equation}
Now it suffices to show that our random variables $\{B_j\}$ are mutually independent.
This is straightforward as we set $P(\bm{\alpha}_{2};\lambda)$ as a sparse penalty for each row $\bm{\alpha}_{2,j}$ in Eq.~\eqref{eq:spr-permute}, respectively.
Then problem of Eq.~\eqref{eq:spr-permute} now is a combination of independent sub-problems for each row $\bm{\alpha}_{2,j}$, and the solution only depends on $(\bm{x}_{2,j},\bm{y}_{2,j},\bm{\beta}(\lambda;\mathcal{D}_1))$.
Then with fixed $\bm{\beta}(\lambda;\mathcal{D}_1)$, the magnitude of $W_j$ is fixed, while the sign of $W_j$ is determined by the permuted label $\tilde{\bm{y}}_{2,j}$, where the mutual independence naturally exist.

Finally, after we control the FSR rate for the second subset, we can get the estimate of $\bm{\beta}(\lambda;\mathcal{D}_2)$ based on the identified clean data in the second subset, and return to run knockoff-SPR on the first subset in a similar pipeline.
Then we have for the whole dataset:
\begin{equation}
\begin{aligned}
\mathrm{FSR}&=\mathbb{E}\left[\frac{|\mathcal{S}_1\cap \mathcal{C}_1| + |\mathcal{S}_2\cap \mathcal{C}_2|}{|\mathcal{C}_1|+ |\mathcal{C}_2|}\right]\\
&\leq\mathbb{E}\left[\frac{|\mathcal{S}_1\cap \mathcal{C}_1|}{|\mathcal{C}_1|}\right]+\mathbb{E}\left[\frac{|\mathcal{S}_2\cap \mathcal{C}_2|}{|\mathcal{C}_2|}\right]\\
&\leq 2\frac{c-2\kappa_{\min}}{c-2+2\kappa_{\min}}q.
\end{aligned}
\end{equation}
To control the FSR with $q$, the threshold of $T$ should be defined as $\frac{c-2+2\kappa_{\min}}{2(c-2\kappa_{\min})}q$, which leads to Theorem 2.
As 
$\frac{c-2\kappa_{\min}}{c-2+2\kappa_{\min}}\leq\frac{c}{c-2}$, we can drop $\kappa_{\min}$ when $c>2$ to get a more elegant result.

\end{proof}

\section{\label{sec:spr}Noisy set recovery theorem of SPR}

Recall that we are solving the problem of 
\begin{equation}
\label{eq:thm-problem}
\min_{\vec{\bm{\gamma}}} \left\Vert\vec{\bm{y}}-\mathring{\bm{X}}\vec{\bm{\gamma}}\right\Vert _{2}^{2}+\lambda \left\Vert \vec{\bm{\gamma}}\right\Vert _{1}.
\end{equation}
\begin{prop}
\label{prop}
Assume that $\mathring{\bm{X}}^\top\mathring{\bm{X}}$ is invertible. 
If 
\begin{equation}
\label{appendix:prop-condition}
\left\Vert\lambda  \mathring{\bm{X}}_{\mathcal{S}^{c}}^{\top} \mathring{\bm{X}}_{\mathcal{S}}\left(\mathring{\bm{X}}_{\mathcal{S}}^{\top} \mathring{\bm{X}}_{\mathcal{S}}\right)^{-1}  \hat{\bm{v}}_{\mathcal{S}}+ \mathring{\bm{X}}_{\mathcal{S}^{c}}^{\top}\left(\bm{I}-\bm{I}_\mathcal{S}\right)(\mathring{\bm{X}} \bm{\varepsilon})\right\Vert_{\infty}<\lambda
\end{equation}
holds for all $\hat{\bm{v}}_{\mathcal{S}} \in[-1,1]^{\mathcal{S}}$, where $\bm{I}_{\mathcal{S}}=\mathring{\bm{X}}_{\mathcal{S}}\left(\mathring{\bm{X}}_{\mathcal{S}}^{\top} \mathring{\bm{X}}_{\mathcal{S}}\right)^{-1} \mathring{\bm{X}}_{\mathcal{S}}^{\top}$,
then the estimator $\hat{\vec{\bm{\gamma}}}$ of Eq.~\eqref{eq:thm-problem} satisfies that
\begin{equation*}
\hat{\mathcal{S}}=\mathrm{supp}\left(\hat{\vec{\bm{\gamma}}}\right)\subseteq \mathrm{supp}\left(\vec{\bm{\gamma}}^*\right)=\mathcal{S}. 
\end{equation*}

\noindent Moreover,
if the sign consistency
\begin{equation}
\label{appendix:sign-consisty}
\operatorname{sign}\left(\hat{\vec{\bm{\gamma}}}_\mathcal{S}\right)=\operatorname{sign}\left(\vec{\bm{\gamma}}^{*}_\mathcal{S}\right)
\end{equation}
holds,
Then 
$\hat{\vec{\bm{\gamma}}}$ is the unique solution of~\eqref{eq:thm-problem} with the same sign as $\hat{\vec{\bm{\gamma}}}^*$.
\end{prop}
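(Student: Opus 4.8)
I would prove this by the \emph{primal--dual witness} (PDW) construction, following the Lasso support‑recovery analysis of~\cite{wainwright2009sharp}: build an explicit candidate primal solution $\hat{\vec{\bm\gamma}}$ supported on $S$ together with a dual vector $\hat{\bm v}$, verify that $(\hat{\vec{\bm\gamma}},\hat{\bm v})$ satisfies the subgradient optimality (KKT) conditions of~\eqref{eq:thm-problem}, and then use \emph{strict} dual feasibility on $S^{c}$ to upgrade ``optimal'' to ``unique''. Throughout I work with the $\tfrac12$‑normalized least‑squares objective (as in the SPR display and consistent with the noise term appearing in~\eqref{appendix:prop-condition}); replacing it by the version literally written in~\eqref{eq:thm-problem} only rescales $\lambda$. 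A quick preliminary: since $\mathring{\bm X}^{\top}\mathring{\bm X}$ is invertible it is positive definite, so its principal submatrix $\mathring{\bm X}_{S}^{\top}\mathring{\bm X}_{S}$ is positive definite and $\mathring{\bm X}_{S}$ has full column rank.

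\textbf{Construction.} Set $\hat{\vec{\bm\gamma}}_{S^{c}}=\bm 0$ and let $\hat{\vec{\bm\gamma}}_{S}$ be the minimizer of the restricted program $\min_{\bm\gamma_{S}}\tfrac12\|\vec{\bm y}-\mathring{\bm X}_{S}\bm\gamma_{S}\|_{2}^{2}+\lambda\|\bm\gamma_{S}\|_{1}$, which is unique by the strict convexity just noted; let $\hat{\bm v}_{S}\in\partial\|\hat{\vec{\bm\gamma}}_{S}\|_{1}$ be the subgradient from its stationarity equation. I would then \emph{define} $\hat{\bm v}_{S^{c}}$ to be whatever the stationarity equation on $S^{c}$ forces. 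Substituting the model $\vec{\bm y}=\mathring{\bm X}_{S}\vec{\bm\gamma}^{*}_{S}+\mathring{\bm X}\bm\varepsilon$ (recall $\vec{\bm\gamma}^{*}$ is supported on $S$), solving the $S$‑equation for $\hat{\vec{\bm\gamma}}_{S}$, and using $(\bm I-\bm I_{S})\mathring{\bm X}_{S}=\bm 0$, this collapses to
\[
\lambda\,\hat{\bm v}_{S^{c}}=\lambda\,\mathring{\bm X}_{S^{c}}^{\top}\mathring{\bm X}_{S}\bigl(\mathring{\bm X}_{S}^{\top}\mathring{\bm X}_{S}\bigr)^{-1}\hat{\bm v}_{S}+\mathring{\bm X}_{S^{c}}^{\top}(\bm I-\bm I_{S})(\mathring{\bm X}\bm\varepsilon).
\]
By hypothesis~\eqref{appendix:prop-condition}, applied to the realized $\hat{\bm v}_{S}\in[-1,1]^{S}$, the right‑hand side has $\|\cdot\|_{\infty}<\lambda$, so $\|\hat{\bm v}_{S^{c}}\|_{\infty}<1$. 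Hence $\hat{\bm v}=(\hat{\bm v}_{S},\hat{\bm v}_{S^{c}})$ is a legitimate element of $\partial\|\hat{\vec{\bm\gamma}}\|_{1}$, the full KKT system holds, and $\hat{\vec{\bm\gamma}}$ is a minimizer of~\eqref{eq:thm-problem} with $\mathrm{supp}(\hat{\vec{\bm\gamma}})\subseteq S$.

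\textbf{Uniqueness / support containment (the step needing most care).} Let $\bm\gamma'$ be any minimizer. A short convexity argument shows all minimizers produce the same fitted vector, $\mathring{\bm X}\bm\gamma'=\mathring{\bm X}\hat{\vec{\bm\gamma}}$, hence — via the stationarity equation — the same subgradient $\hat{\bm v}\in\partial\|\bm\gamma'\|_{1}$. For $j\in S^{c}$ we have $|\hat{\bm v}_{j}|<1$ while $\hat{\bm v}_{j}\in\partial|\bm\gamma'_{j}|$, which is possible only if $\bm\gamma'_{j}=0$. Thus every minimizer is supported on $S$, proving $\hat S=\mathrm{supp}(\hat{\vec{\bm\gamma}})\subseteq S$. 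Restricted to $S$, $\bm\gamma'_{S}$ then satisfies $\mathring{\bm X}_{S}\bm\gamma'_{S}=\mathring{\bm X}_{S}\hat{\vec{\bm\gamma}}_{S}$ with $\mathring{\bm X}_{S}$ of full column rank, so $\bm\gamma'=\hat{\vec{\bm\gamma}}$: the minimizer is unique.

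\textbf{Signed support.} Finally, under the extra hypothesis $\operatorname{sign}(\hat{\vec{\bm\gamma}}_{S})=\operatorname{sign}(\vec{\bm\gamma}^{*}_{S})$, combined with $\hat{\vec{\bm\gamma}}_{S^{c}}=\bm 0=\vec{\bm\gamma}^{*}_{S^{c}}$, we get $\operatorname{sign}(\hat{\vec{\bm\gamma}})=\operatorname{sign}(\vec{\bm\gamma}^{*})$; together with uniqueness, $\hat{\vec{\bm\gamma}}$ is the unique solution of~\eqref{eq:thm-problem} and it recovers the signed support of $\vec{\bm\gamma}^{*}$ (reading the starred quantity in the statement as $\vec{\bm\gamma}^{*}$). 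The only genuinely delicate point is the uniqueness argument above — turning strict dual feasibility together with full column rank of $\mathring{\bm X}_{S}$ into uniqueness of the optimizer; the remainder is routine PDW bookkeeping, made easy because~\eqref{appendix:prop-condition} hands us strict dual feasibility directly.
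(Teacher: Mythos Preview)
Your proposal is correct and follows essentially the same primal--dual witness / KKT route as the paper: write the stationarity conditions, solve the $S$-block for $\hat{\vec{\bm\gamma}}_S$ in terms of $\hat{\bm v}_S$, substitute the model for $\vec{\bm y}$, and read off that strict dual feasibility on $S^c$ is exactly condition~\eqref{appendix:prop-condition}. The one place you work harder than the paper is uniqueness: since $\mathring{\bm X}^{\top}\mathring{\bm X}$ is assumed invertible, the full objective is strictly convex and uniqueness is immediate, so your ``shared fitted vector / shared subgradient'' argument---while correct and the right tool when only $\mathring{\bm X}_S$ has full column rank---is more than needed here.
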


\begin{proof}
Note that Eq.~\eqref{eq:thm-problem} is convex that has global minima. 
Denote Eq.~\eqref{eq:thm-problem} as $L$, the solution of $\partial L/\partial\vec{\bm{\gamma}}=0$ is the unique minimizer. 
Hence we have
\begin{equation}
\frac{\partial L}{\partial\vec{\bm{\gamma}}}=-\mathring{\bm{X}}^{\top}\left(\vec{\bm{y}}-\mathring{\bm{X}}\vec{\bm{\gamma}}\right)+\lambda\bm{v}=0
\end{equation}
where $\bm{v}=\partial \left\Vert\vec{\bm{\gamma}}\right\Vert_1 /\partial\vec{\bm{\gamma}}$.
Note that $\left\Vert\vec{\bm{\gamma}}\right\Vert_1 $ is non-differentiable, so we
instead compute its sub-gradient. 
Further note that $v_{i}=\partial\left\Vert \vec{\bm{\gamma}}\right\Vert _{1}/\partial\vec{\gamma}_{i}=\partial\left|\vec{\gamma}_{i}\right|/\partial\gamma_{i}$. 
Hence $v_{i}=\mathrm{sign}\left(\vec{\gamma}_{i}\right)$ if $\vec{\gamma}_{i}\neq0$ and $v_{i}\in\left[-1,1\right]$ if $\vec{\gamma}_{i}=0$.
To distinguish between the two cases, we assume $v_{i}\in\left(-1,1\right)$ if $\vec{\gamma}_{i} = 0$.
Hence there exists $\hat{\bm{v}}\in\mathbb{R}^{n\times1}$ such that
\begin{equation}
-\mathring{\bm{X}}^{\top}\left(\vec{\bm{y}}-\mathring{\bm{X}}  \hat{\vec{\bm{\gamma}}}\right)+\lambda  \hat{\bm{v}}=0,
\end{equation}
$\hat{v}_{i}=\mathrm{sign}\left(\hat{\vec{\gamma}}_{i}\right)$ if $i\in\hat{\mathcal{S}}$ and $\hat{v}_{i}\in(-1,1)$ if $i\in\hat{\mathcal{S}}^c$.

To obtain $\hat{\mathcal{S}} \subseteq \mathcal{S}$, 
we should have $\hat{\vec{\gamma}}_{i}=0$ for $i \in \mathcal{S}^{c}$, that is, $\forall i\in \mathcal{S}^{c},\left|\hat{v}_{i}\right|<1$, i.e.,
\begin{equation}
\left\Vert\mathring{\bm{X}}_{\mathcal{S}^{c}}^{\top}\left(\vec{\bm{y}}-\mathring{\bm{X}}_{\mathcal{S}}  \hat{\vec{\bm{\gamma}}}_\mathcal{S}\right)\right\Vert_{\infty}<\lambda,
\label{eq:prop-inequality}
\end{equation}
For $i \in \mathcal{S}$, we have
\begin{equation}
-\mathring{\bm{X}}_{\mathcal{S}}^{\top}\left(\vec{\bm{y}}-\mathring{\bm{X}}_{\mathcal{S}} \hat{\vec{\bm{\gamma}}}_{\mathcal{S}}\right)+\lambda \hat{\bm{v}}_{\mathcal{S}}=0.
\end{equation}
If $\mathring{\bm{X}}^\top\mathring{\bm{X}}$ is invertible then

\begin{equation}
\hat{\vec{\bm{\gamma}}}_{\mathcal{S}}=\left(\mathring{\bm{X}}_{\mathcal{S}}^{\top}\mathring{\bm{X}}_{\mathcal{S}}\right)^{-1}\left(\mathring{\bm{X}}_{\mathcal{S}}^{\top}\vec{\bm{y}}-\lambda\hat{\bm{v}}_{\mathcal{S}}\right)
\end{equation}
Recall that 
we have
\begin{equation}
\vec{\bm{y}}=\mathring{\bm{X}}_{\mathcal{S}}\vec{\bm{\gamma}}_{\mathcal{S}}^{*}+\mathring{\bm{X}}\vec{\bm{\varepsilon}}
\label{eq:y-ground-truth-appendix}
\end{equation}
Hence
\begin{equation}
\hat{\vec{\bm{\gamma}}}_{\mathcal{S}}=\vec{\bm{\gamma}}_{\mathcal{S}}^{*}+\delta_{\mathcal{S}}, \quad \delta_{\mathcal{S}}:=\left(\mathring{\bm{X}}_{\mathcal{S}}^{\top} \mathring{\bm{X}}_{\mathcal{S}}\right)^{-1}\left[\mathring{\bm{X}}_{\mathcal{S}}^{\top} \mathring{\bm{X}} \vec{\bm{\varepsilon}}-\lambda\hat{\bm{v}}_{\mathcal{S}}\right].
\label{eq:prop-condition}
\end{equation}
Plugging~\eqref{eq:prop-condition} and~\eqref{eq:y-ground-truth-appendix} into~\eqref{eq:prop-inequality} we have
\begin{equation}
\left\Vert\mathring{\bm{X}}_{\mathcal{S}^{c}}^{\top} \mathring{\bm{X}} \vec{\bm{\varepsilon}}- \mathring{\bm{X}}_{\mathcal{S}^{c}}^{\top} \mathring{\bm{X}}_{\mathcal{S}}\left(\mathring{\bm{X}}_{\mathcal{S}}^{\top} \mathring{\bm{X}}_{\mathcal{S}}\right)^{-1}\left[\mathring{\bm{X}}_{\mathcal{S}}^{\top} \mathring{\bm{X}} \vec{\bm{\varepsilon}}-\lambda \hat{\bm{v}}_{\mathcal{S}}\right]\right\Vert_{\infty}<\lambda,
\end{equation}
or equivalently
\begin{equation}
\left\Vert\lambda  \mathring{\bm{X}}_{\mathcal{S}^{c}}^{\top} \mathring{\bm{X}}_{\mathcal{S}}\left(\mathring{\bm{X}}_{\mathcal{S}}^{\top} \mathring{\bm{X}}_{\mathcal{S}}\right)^{-1}  \hat{\bm{v}}_{\mathcal{S}}+ \mathring{\bm{X}}_{\mathcal{S}^{c}}^{\top}\left(\bm{I}-\bm{I}_\mathcal{S}\right)\mathring{\bm{X}} \vec{\bm{\varepsilon}}\right\Vert_{\infty}<\lambda,
\end{equation}
where $\bm{I}_S=\mathring{\bm{X}}_{\mathcal{S}}\left(\mathring{\bm{X}}_{\mathcal{S}}^{\top} \mathring{\bm{X}}_{\mathcal{S}}\right)^{-1} \mathring{\bm{X}}_{\mathcal{S}}^{\top}$.
To ensure the sign consistency, replacing $\hat{\bm{v}}_{\mathcal{S}}=\operatorname{sign}\left(\vec{\bm{\gamma}}_{\mathcal{S}}^{*}\right)$ in the inequality above leads to the final result.
\end{proof}
\begin{lem}
\label{lemma}
Assume that $\vec{\bm{\varepsilon}}$ is 
indenpendent sub-Gaussian with zero mean and bounded variance
$\mathrm{Var}\left(\vec{\bm{\varepsilon}}_{i}\right)\leq\sigma^2$.

Then with probability at least
\begin{equation}
1-2 c n \exp \left(-\frac{\lambda^{2} \eta^{2} }{2  \sigma^{2} \max_{i\in S^{c}}\left\Vert \mathring{\bm{X}}_{i}\right\Vert _{2}^{2}}\right)
\end{equation}
there holds
\begin{equation}
\label{appendix:first-bound}
\left\|\mathring{\bm{X}}_{\mathcal{S}^{c}}^{\top}\left(\bm{I}-\bm{I}_{\mathcal{S}}\right)\left(\mathring{\bm{X}} \vec{\bm{\varepsilon}}\right)\right\|_{\infty} \leq \lambda \eta
\end{equation}
and
\begin{equation}
\label{appendix:second-bound}
\left\|\left(\mathring{\bm{X}}_{\mathcal{S}}^{\top} \mathring{\bm{X}}_{\mathcal{S}}\right)^{-1} \mathring{\bm{X}}_{\mathcal{S}}^{\top} \mathring{\bm{X}} \vec{\bm{\varepsilon}}\right\|_{\infty} \leq \frac{\lambda \eta}{ \sqrt{C_{\min }} \max_{i\in \mathcal{S}^{c}}\left\Vert \mathring{\bm{X}}_{i}\right\Vert _{2}}.
\end{equation}
\end{lem}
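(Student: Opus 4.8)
The plan is to prove both displays by the same three-step recipe: write every coordinate of the two random vectors as a fixed linear functional $\bm{a}^\top\vec{\bm{\varepsilon}}$ of the noise, control $\|\bm{a}\|_2$, apply a sub-Gaussian tail bound coordinatewise, and close with a union bound over the $cn$ coordinates. The two structural facts that make the norm control clean are: (i) $\mathring{\bm{X}}=\bm{I}_c\otimes\tilde{\bm{X}}$ is an orthogonal projection, since $\tilde{\bm{X}}=\bm{I}-\bm{X}(\bm{X}^\top\bm{X})^\dagger\bm{X}^\top$ is symmetric and idempotent, hence $\|\mathring{\bm{X}}\|_{\mathrm{op}}\le 1$; and (ii) under \textbf{C1}, $\mathring{\bm{X}}_S$ has full column rank, so $\bm{I}-\bm{I}_S$ is the orthogonal projection onto $\mathrm{col}(\mathring{\bm{X}}_S)^\perp$ with $\|\bm{I}-\bm{I}_S\|_{\mathrm{op}}\le1$, and moreover $\|(\mathring{\bm{X}}_S^\top\mathring{\bm{X}}_S)^{-1}\|_{\mathrm{op}}=1/C_{\min}$.

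For Eq.~\eqref{appendix:first-bound}: for $i\in S^c$ the $i$-th entry of $\mathring{\bm{X}}_{S^c}^\top(\bm{I}-\bm{I}_S)\mathring{\bm{X}}\vec{\bm{\varepsilon}}$ is $\bm{w}_i^\top\vec{\bm{\varepsilon}}$ with $\bm{w}_i=\mathring{\bm{X}}(\bm{I}-\bm{I}_S)\mathring{\bm{X}}_i$ (using symmetry of $\mathring{\bm{X}}$ and of $\bm{I}-\bm{I}_S$), so by (i)--(ii) $\|\bm{w}_i\|_2\le\|\mathring{\bm{X}}_i\|_2\le\max_{j\in S^c}\|\mathring{\bm{X}}_j\|_2$. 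For Eq.~\eqref{appendix:second-bound}: for $i\in S$ the $i$-th entry of $(\mathring{\bm{X}}_S^\top\mathring{\bm{X}}_S)^{-1}\mathring{\bm{X}}_S^\top\mathring{\bm{X}}\vec{\bm{\varepsilon}}$ is $\bm{u}_i^\top\vec{\bm{\varepsilon}}$ with $\bm{u}_i=\mathring{\bm{X}}\,\mathring{\bm{X}}_S(\mathring{\bm{X}}_S^\top\mathring{\bm{X}}_S)^{-1}\bm{e}_i$, and $\|\bm{u}_i\|_2^2\le\bm{e}_i^\top(\mathring{\bm{X}}_S^\top\mathring{\bm{X}}_S)^{-1}\bm{e}_i\le 1/C_{\min}$. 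Since $\vec{\bm{\varepsilon}}$ has independent mean-zero $\sigma$-sub-Gaussian entries, each $\bm{a}^\top\vec{\bm{\varepsilon}}$ is $\sigma\|\bm{a}\|_2$-sub-Gaussian, so $\mathbb{P}(|\bm{w}_i^\top\vec{\bm{\varepsilon}}|>\lambda\eta)\le 2\exp(-\lambda^2\eta^2/(2\sigma^2\max_{j\in S^c}\|\mathring{\bm{X}}_j\|_2^2))$, and taking the threshold $t=\lambda\eta/(\sqrt{C_{\min}}\max_{j\in S^c}\|\mathring{\bm{X}}_j\|_2)$ in the tail bound for $\bm{u}_i^\top\vec{\bm{\varepsilon}}$ makes its exponent coincide with the one above.

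Finally I would union-bound over the at most $|S^c|$ coordinates of the first event and the at most $|S|$ coordinates of the second, so that both displays fail with probability at most $2(|S|+|S^c|)\exp(-\lambda^2\eta^2/(2\sigma^2\max_{j\in S^c}\|\mathring{\bm{X}}_j\|_2^2))=2cn\exp(-\lambda^2\eta^2/(2\sigma^2\max_{j\in S^c}\|\mathring{\bm{X}}_j\|_2^2))$, which is exactly the stated probability. There is no deep obstacle: the argument is a textbook concentration-plus-union-bound. The one place that needs care is the $\ell_2$ control of $\bm{w}_i$ and $\bm{u}_i$, where the key moves are to notice that the outer $\mathring{\bm{X}}$ is itself an orthogonal projection (so it can be dropped at no cost) and that \textbf{C1} pins down $\|(\mathring{\bm{X}}_S^\top\mathring{\bm{X}}_S)^{-1}\|_{\mathrm{op}}$. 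One should also read the hypothesis as "entrywise $\sigma$-sub-Gaussian", since a bound on $\mathrm{Var}(\vec{\bm{\varepsilon}}_i)$ alone does not yield the exponential tails used above.
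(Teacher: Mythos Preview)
Your proposal is correct and follows essentially the same route as the paper: bound the $\ell_2$-norm (equivalently, the variance proxy) of each coordinate using the projection structure of $\mathring{\bm{X}}$ and $\bm{I}-\bm{I}_S$ together with the eigenvalue bound from \textbf{C1}, apply a sub-Gaussian/Hoeffding tail bound coordinatewise, and union-bound over the $|S^c|+|S|=cn$ coordinates. Your remark that the hypothesis should be read as ``entrywise $\sigma$-sub-Gaussian'' (not merely bounded variance) is a valid clarification of the lemma's phrasing.
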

\begin{proof}
Let $\bm{z}^c= \mathring{\bm{X}}_{\mathcal{S}^{c}}^{\top}\left(\bm{I}-\bm{I}_{\mathcal{S}}\right)\left(\mathring{\bm{X}} \vec{\bm{\varepsilon}}\right)$, for each  $i\in \mathcal{S}^c$ the variance can be bounded by
\begin{equation*}
\operatorname{Var}\left(\bm{z}_i^c\right) \leq\sigma^{2}\mathring{\bm{X}}_{i}^{\top}\left(\bm{I}-\bm{I}_{S}\right)^{2}\mathring{\bm{X}}_{i} \leq  \sigma^{2} \max_{i\in \mathcal{S}^{c}}\left\Vert \mathring{\bm{X}}_{i}\right\Vert _{2}^{2}.
\end{equation*}
Hoeffding inequality implies that 
\begin{equation*}
\begin{aligned}
& \mathbb{P}\left(\left\| \mathring{\bm{X}}_{\mathcal{S}^{c}}^{\top}\left(\bm{I}-\bm{I}_{\mathcal{S}}\right)\left(\mathring{\bm{X}} \vec{\bm{\varepsilon}}\right)\right\|_{\infty} \geq t\right)\\
& \leq 2\left|\mathcal{S}^{c}\right| \exp \left(-\frac{t^{2} }{2  \sigma^{2} \max_{i\in \mathcal{S}^{c}}\left\Vert \mathring{\bm{X}}_{i}\right\Vert _{2}^{2}}\right),
\end{aligned}
\end{equation*}
Setting $t=\lambda\eta$ leads to the result.

Now let $\bm{z}=\left(\mathring{\bm{X}}_{\mathcal{S}}^{\top}\mathring{\bm{X}}_{\mathcal{S}}\right)^{-1}\mathring{\bm{X}}_{\mathcal{S}}^{\top}\mathring{\bm{X}}\vec{\bm{\varepsilon}}$, we have
\begin{equation*}
\begin{aligned} \mathrm{Var}\left(\bm{z}\right) &=\left(\mathring{\bm{X}}_{\mathcal{S}}^{\top}\mathring{\bm{X}}_{\mathcal{S}}\right)^{-1}\mathring{\bm{X}}_{\mathcal{S}}^{\top}\mathring{\bm{X}}\mathrm{Var}\left(\vec{\bm{\varepsilon}}\right)\mathring{\bm{X}}^{\top}\mathring{\bm{X}}_{\mathcal{S}}\left(\mathring{\bm{X}}_{\mathcal{S}}^{\top}\mathring{\bm{X}}_{\mathcal{S}}\right)^{-1}  \\ 
& \leq\sigma^{2}\left(\mathring{\bm{X}}_{\mathcal{S}}^{\top}\mathring{\bm{X}}_{\mathcal{S}}\right)^{-1}
 \leq\frac{\sigma^{2}}{C_{\min}}\bm{I}.
\end{aligned}
\end{equation*}
Then
\begin{equation*}
\mathbb{P}\left(\left\|\left(\mathring{\bm{X}}_{\mathcal{S}}^{\top} \mathring{\bm{X}}_{\mathcal{S}}\right)^{-1} \mathring{\bm{X}}_{\mathcal{S}}^{\top} \mathring{\bm{X}}  \vec{\bm{\varepsilon}}\right\|_{\infty} \geq t\right) \leq 2\left|S\right| \exp \left(-\frac{t^{2}  C_{\min } }{2 \sigma^{2}}\right).
\end{equation*}
Choose
\begin{equation}
t=\frac{\lambda \eta}{ \sqrt{C_{\min }} \max_{i\in \mathcal{S}^{c}}\left\Vert \mathring{\bm{X}}_{i}\right\Vert _{2}},
\end{equation}
then there holds
\begin{equation*}
\begin{aligned}
& \mathbb{P}\left\{\|\left(\mathring{\bm{X}}_{\mathcal{S}}^{\top} \mathring{\bm{X}}_{\mathcal{S}}\right)^{-1} \mathring{\bm{X}}_{\mathcal{S}}^{\top} \mathring{\bm{X}}  \vec{\bm{\varepsilon}}\|_{\infty} \geq \frac{\lambda \eta}{ \sqrt{C_{\min }} \max_{i\in \mathcal{S}^{c}}\left\Vert \mathring{\bm{X}}_{i}\right\Vert _{2}}\right\} \\
& \leq 2\left|\mathcal{S}\right| \exp \left(-\frac{\lambda^{2} \eta^{2} }{2  \sigma^{2} \max_{i\in \mathcal{S}^{c}}\left\Vert \mathring{\bm{X}}_{i}\right\Vert _{2}^{2}}\right).
\end{aligned}
\end{equation*}
\end{proof}
\subsection{Proof of Theorem 1}
\begin{proof}
The proof essentially follows the treatment in~\cite{wainwright2009sharp}.
The results follow by applying Lemma~\ref{lemma} to Proposition~\ref{prop}.
Inequality~\eqref{appendix:prop-condition} holds if condition C2 and the first bound~\eqref{appendix:first-bound} hold, 
which proves the first part of the theorem.
The sign consistency~\eqref{appendix:sign-consisty} holds if condition C3 and the second bound~\eqref{appendix:second-bound} hold, 
which gives the second part of the theorem.

It suﬃces to show that $\hat{\mathcal{S}}\subseteq \mathcal{S}$ implies $\hat{\mathcal{C}}^c\subseteq \mathcal{C}^c$.
Consider one instance $i$, there are three possible cases for $\bm{\gamma}_{i}^{*}\in\mathbb{R}^{1\times c}$: 
i) $\gamma_{i,j}^{*}\neq0,\forall j\in\left[c\right]$; 
ii) $\gamma_{i,j}^{*}=0,\forall j\in\left[c\right]$; 
iii) $\exists j,k\in\left[c\right],s.t.\ \gamma_{i,j}^{*}=0,\gamma_{i,k}^{*}\neq0$.
If instance $i$ follows case i or case iii, then $i\in \mathcal{C}^c$.
If it follows case ii, then $i\in \mathcal{C}$,
and the indexes of all elements of $\bm{\gamma}_i$ are in $\mathcal{S}^c$.
Since we have $\hat{\mathcal{S}}\subseteq \mathcal{S}$, all elements of $\bm{\gamma}_{i}$ is in $\hat{\mathcal{S}}^{c}$, hence $i\in\hat{\mathcal{C}}$. 
Then we have $\hat{\mathcal{C}}^c\subseteq \mathcal{C}^c$.
\end{proof}

\section{\label{sec:visualization}More Experimental Results}
\textbf{Histogram of the median value of IRR condition of SPR}.
We visualize the median value of the irrepresentable (IRR) value, \emph{i.e.}, $\{\Vert (\mathbf{X}_\mathcal{S}^\top \mathbf{X}_\mathcal{S})^{-1}\mathbf{X}_{\mathcal{S}}^\top X_j \Vert_{1}\}_j$ of SPR final epoch on CIFAR10 with various noisy scenarios in Fig.~\ref{fig:spr-irr}.
As SPR is running on each piece split from the training set,
we calculate matrix $\mathring{\bm{X}}_{\mathcal{S}^c}^{\top}\mathring{\bm{X}}_{\mathcal{S}}(\mathring{\bm{X}}_{\mathcal{S}}^{\top}\mathring{\bm{X}}_{\mathcal{S}})^{-1}$ in irrepresentable condition (C2 in Theorem 1) for each piece at the final epoch.
Then the $L_1$ norm of each row of the matrix is the IRR value of corresponding clean data.
The median value of IRR values in a single piece is used to construct the histogram.
For the noise scenario of Asy. 40\% and Sym. 40\%, the median IRR value is small, indicating weak collinearity between clean data and noisy data.
In these cases, SPR has more chance to distinguish noisy data from clean data and thus leads to a good FSR control capacity.
For the noise scenario of Sym. 80\%, the median IRR values are much larger, indicating a strong multi-collinearity.
Thus SPR can hardly distinguish between clean data and noisy data, leading to a high FSR rate.
\begin{figure}[h]
\centering
\includegraphics[width=1\linewidth]{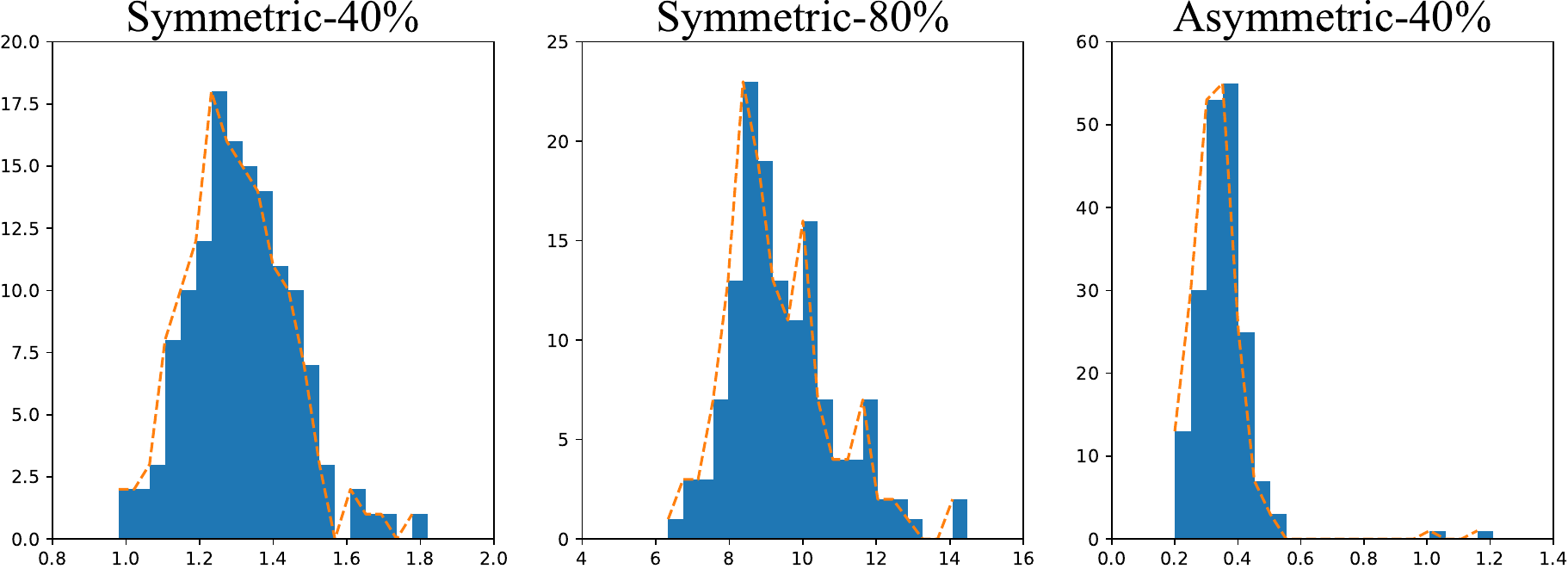}
\caption{Histogram of the median value of the IRR value of SPR
on CIFAR10 with various noisy scenarios.}
\label{fig:spr-irr}
\end{figure}

% \end{document}

\end{document}